\pdfoutput=1
\documentclass[10pt, twocolumn, twoside]{IEEEtran}

\usepackage{multirow}
\usepackage{makecell}
\usepackage{hyperref}
\usepackage{textcomp}
\usepackage{epsfig,latexsym}
\usepackage{float}
\usepackage{indentfirst}
\usepackage{amsmath}
\usepackage{amssymb}
\usepackage{xcolor}
\usepackage{times}
\usepackage{subfigure}
\usepackage{psfrag}
\usepackage{cite}
\usepackage{flushend}
\usepackage{lastpage}
\usepackage{epstopdf}
\usepackage{fancyhdr}
\usepackage{stfloats}
\usepackage{color}
\usepackage[noend]{algpseudocode}
\usepackage{algorithmicx,algorithm}
\usepackage{graphicx}
\usepackage{tikz}
\usetikzlibrary{positioning,calc}
\definecolor{risblue}{RGB}{31,78,121}
\definecolor{risbluef}{RGB}{234,240,247}
\definecolor{risgreen}{RGB}{46,125,50}
\definecolor{risgreenf}{RGB}{233,242,233}
\definecolor{risorange}{RGB}{179,67,28}
\definecolor{risorangef}{RGB}{249,235,229}
\definecolor{risamber}{RGB}{196,150,30}
\definecolor{risamberf}{RGB}{250,245,228}
\definecolor{risgray}{RGB}{85,85,85}
\definecolor{risgrayf}{RGB}{245,245,245}

\def\diag{\textrm{diag}}

\newcommand{\bm}[1]{\mbox{\boldmath{$#1$}}}

\newcommand{\R}{\mathbb{R}}
\newcommand{\C}{\mathbb{C}}

\newtheorem{lemma}{Lemma}
\newtheorem{remark}{\bf{Remark}}
\newtheorem{proposition}{Proposition}

\begin{document}

\title{A Particle-Swarm-Assisted Gradient Meta-Learning Algorithm for Joint Transmit Precoding and STAR-RIS Coefficient Optimization}

\author{
\normalsize{\IEEEauthorblockN{Kang Zhou}
\IEEEauthorblockA{\normalsize School of Artificial Intelligence,\\
\normalsize Mianyang City College,\\
\normalsize Mianyang 621000, China\\
\normalsize zhoukang.sicnu@foxmail.com}}
}

\maketitle
\vspace{-2.0em}

\begin{abstract}
This paper investigates the joint optimization of the transmit precoder and the transmission/reflection coefficients of a simultaneously transmitting and reflecting reconfigurable intelligent surface (STAR-RIS) to maximize the weighted sum rate (WSR) in a multi-user downlink system. We propose a particle-swarm-assisted gradient meta-learning (PSA-GML) algorithm to solve this non-convex problem. Specifically, the original problem is first equivalently transformed into a tractable form by an amplitude-split parameterization and a collapsed precoder representation, which automatically satisfy the energy-conservation constraint and reduce the search dimension. Then, particle swarm optimization (PSO) performs a global search over the STAR-RIS coefficients to produce a high-quality warm start that is robust to initialization, with the transmit precoder obtained in closed form. Departing from conventional alternating optimization (AO), whose outcome is sensitive to the starting point, a coordinate-wise long short-term memory (LSTM) meta-optimizer trained by first-order gradient meta-learning is further employed to jointly refine the STAR-RIS coefficients and the transmit precoder, learning per-coordinate adaptive update rules from data. Finally, the meta-optimizer is trained offline over multiple channel realizations and applied to unseen channels without further adaptation. Numerical results show that PSA-GML attains an 11.06 bits/s/Hz WSR at a transmit SNR of 10 dB with $N=32$ elements and $K=4$ users, exceeding the conventional alternating optimization (AO) by 13.1\%---and by 6.2\% even when AO is further equipped with multiple random restarts---and the random-phase scheme by 35.1\%, with robustness against initialization. Moreover, in the interference-limited regime the learned optimizer attains a final WSR of $83.9\%$ of that reached by the hand-designed Adam refinement without any manual hyper-parameter tuning, and it transfers zero-shot across operating regimes, indicating that the learned update rule captures the intrinsic structure of the WSR landscape rather than the specifics of the training scenario.
\end{abstract}
\vspace{-0.5em}
\begin{IEEEkeywords}
STAR-RIS, weighted sum rate, joint optimization, particle swarm optimization, gradient meta-learning.
\end{IEEEkeywords}
\vspace{-0.5em}

\section{Introduction}
\vspace{0.5em}
The transition of wireless networks from 5G to 6G is propelled by use cases such as ultra-dense deployment, massive machine-type communication (mMTC), and ultra-reliable low-latency communication (URLLC), which impose growing demands for spectral efficiency, connectivity, and quality of service (QoS)~\cite{b1,b2,b3}. Massive multiple-input multiple-output (MIMO) improves spectral efficiency by deploying many antennas but incurs high hardware cost and energy consumption~\cite{b4}, motivating more cost-effective solutions. Recently, reconfigurable intelligent surfaces (RISs) have emerged as a promising technology that reconfigures the wireless propagation environment through a large number of low-cost passive reflecting elements~\cite{b5,b6,b28}. By adaptively tuning the phase shifts of the reflecting elements, an RIS can create favorable signal paths and suppress interference, thereby improving both the spectral efficiency and the energy efficiency of wireless networks at a low hardware cost~\cite{b7,b27}. However, a conventional RIS can only reflect the incident signal, which means that the transmitter and the receiver must be located on the same side of the surface. To overcome this limitation and achieve full-space $360^{\circ}$ coverage, the simultaneously transmitting and reflecting RIS (STAR-RIS) has been proposed~\cite{b8,b12}. Different from a conventional RIS, each element of a STAR-RIS can split the incident signal into a transmitted part and a reflected part, so that users on both sides of the surface can be served simultaneously~\cite{b12}. Among the various operating protocols, the energy-splitting (ES) mode is the most general one, in which the transmission and reflection amplitudes are independently controlled subject to an energy-conservation constraint~\cite{b8}.

For RIS-aided systems, the joint optimization of the transmit precoder and the passive coefficients is essential to fully exploit the achievable performance. To this end, a variety of optimization approaches have been proposed, including semidefinite relaxation (SDR)~\cite{b5}, alternating optimization (AO)~\cite{b9}, the fractional programming method~\cite{b10}, the weighted minimum mean-square-error (WMMSE) method~\cite{b11}, and the discrete phase-shift design~\cite{b29}. Although these methods achieve satisfactory performance, they generally entail high computational complexity or converge to locally optimal solutions. For STAR-RIS-aided systems, the joint transmission/reflection coefficient and beamforming design has been investigated in~\cite{b12,b13}. Joint beamforming and coefficient design for STAR-RIS assisted NOMA systems and coverage characterization of STAR-RIS networks were studied in~\cite{b14,b15}. Meanwhile, meta-heuristic algorithms such as particle swarm optimization (PSO)~\cite{b16} and genetic algorithms have been applied to RIS phase-shift design owing to their global search capability~\cite{b17}. Meta-heuristic methods, however, do not exploit the gradient information available in the problem.

More recently, learning-based methods have been developed to learn optimization strategies from data. Deep unfolding networks~\cite{b18,b19} unroll iterative optimization algorithms into layer-wise neural architectures and have achieved notable success in signal processing. As a related line, learning-to-optimize (L2O) methods train a recurrent meta-optimizer that maps gradients to parameter updates, thereby discovering a problem-specific update rule~\cite{b20,b21}. Moreover, model-based deep learning~\cite{b22} and meta-learning~\cite{b23,b24} have been exploited to improve both the adaptability and the robustness of wireless optimization algorithms. In particular, the gradient meta-learning joint optimization (GML-JO) approach in~\cite{b25} delegates the two blocks of optimization variables to a pair of learned networks and reports reduced sensitivity to the starting point. Nevertheless, GML-JO initializes the optimization from random points, which can restrict the achievable WSR on a sharply peaked landscape.

This paper studies a STAR-RIS aided multi-user downlink and formulates its WSR maximization problem, which is non-convex and intractable because the transmit precoder, the transmission/reflection amplitudes, and the phase shifts are tightly coupled under the unit-modulus and energy-conservation constraints. To tackle this problem, we develop a particle-swarm-assisted gradient meta-learning (PSA-GML) algorithm. Unlike AO methods, which are sensitive to the initialization, the proposed algorithm first employs PSO to perform a global search over the STAR-RIS coefficients, producing a high-quality warm start that is robust to initialization. Then, building on the gradient meta-learning paradigm of~\cite{b25} but departing from it, a gradient meta-learning stage employs a lightweight coordinate-wise LSTM meta-optimizer trained by first-order gradient meta-learning to jointly refine the STAR-RIS coefficients and the transmit precoder. The contributions of this paper are summarized as follows:
\begin{itemize}
\item We formulate the WSR maximization of a STAR-RIS aided multi-user downlink, where the transmit precoder and the STAR-RIS transmission/reflection coefficients are jointly optimized. Through an amplitude-split parameterization and a collapsed precoder representation, we perform an equivalent transformation of the original non-convex problem, which automatically satisfies the energy-conservation constraint and reduces the search dimension.

\item We then develop the PSA-GML algorithm. Especially, particle swarm optimization performs a global search over the STAR-RIS coefficients to produce a high-quality warm start that is robust to initialization, and a coordinate-wise LSTM meta-optimizer trained by first-order gradient meta-learning learns per-coordinate adaptive update rules to jointly refine the STAR-RIS coefficients and the precoder.

\item The designed meta-optimizer is trained offline over multiple channel realizations by unrolling the update recursion and minimizing the negative WSR averaged over the unrolled steps, which is applied to unseen channels without further adaptation, thereby avoiding the initialization sensitivity of conventional gradient-based methods.
\end{itemize}
Numerical experiments show that PSA-GML reaches an 11.06 bits/s/Hz WSR, delivering a 13.1\% performance enhancement over the benchmark AO method (6.2\% over a multiple-random-restart AO) and a 35.1\% enhancement over the random-phase scheme. Moreover, PSA-GML is largely insensitive to the starting point, and in the interference-limited regime the learned optimizer attains a final WSR of $4.73$ bits/s/Hz---$83.9\%$ of the $5.64$ bits/s/Hz reached by the hand-designed Adam refinement---while capturing about $31.6\%$ of the WSR gain that Adam realizes over the PSO-only baseline.

The remainder of this paper proceeds as follows. Section II describes the system model and the problem formulation, and Section III derives the equivalent transformations of the original problem. Section IV develops the proposed PSA-GML algorithm, Section V reports the numerical results, and Section VI draws the conclusion.

\textit{Notations:} Throughout the paper, italic letters denote scalars, bold lowercase letters denote vectors, and bold uppercase letters denote matrices; $\diag(\mathbf{a})$ constructs a diagonal matrix from the entries of $\mathbf{a}$; $(\cdot)^H$, $(\cdot)^T$, and $\|\cdot\|$ stand for the conjugate transpose, transpose, and Euclidean norm, respectively; $\mathcal{CN}(\mu,\sigma^2)$ is the complex Gaussian distribution with mean $\mu$ and variance $\sigma^2$; and $|\cdot|$ denotes the modulus of a complex scalar.

\section{System Model and Problem Formulation}
\vspace{1.0em}

As shown in Fig.~\ref{fig:sys}, we consider a STAR-RIS aided downlink communication system, where a base station (BS) equipped with $N_t$ antennas serves $K$ single-antenna users with the aid of a STAR-RIS with $N$ elements. The $K$ users are partitioned into $K_t$ transmission users and $K_r$ reflection users located on opposite sides of the STAR-RIS, with $K = K_t + K_r$. The direct links between the BS and the users are assumed to be severely blocked, so that the BS--user communication is established exclusively through the STAR-RIS. In the energy-splitting (ES) mode, the incident signal at each element is split into a transmitted part and a reflected part~\cite{b8}, and the transmission and reflection amplitudes are independently adjustable subject to an energy-conservation constraint, which is more general than the mode-switching (MS) and time-switching (TS) protocols. Let $\bm{\theta}_t = [\theta_{t,1},\dots,\theta_{t,N}]^T$ and $\bm{\theta}_r = [\theta_{r,1},\dots,\theta_{r,N}]^T$ denote the transmission and reflection phase-shift vectors, and let $\bm{\beta}_t = [\beta_{t,1},\dots,\beta_{t,N}]^T$ and $\bm{\beta}_r = [\beta_{r,1},\dots,\beta_{r,N}]^T$ denote the corresponding amplitude coefficients. The transmission and reflection coefficient vectors are then given by
\begin{equation}
\mathbf{v}_t = \bm{\beta}_t \odot e^{j\bm{\theta}_t}, \quad
\mathbf{v}_r = \bm{\beta}_r \odot e^{j\bm{\theta}_r},
\label{eq:coeff}
\end{equation}
where $\odot$ denotes the element-wise product. The amplitude coefficients satisfy the energy-conservation constraint
\begin{equation}
\beta_{t,n}^2 + \beta_{r,n}^2 = 1, \quad \forall n = 1,\dots,N.
\label{eq:energy}
\end{equation}

Let $\mathbf{G} \in \C^{N\times N_t}$ denote the BS-to-RIS channel, and let $\mathbf{H}_t \in \C^{K_t\times N}$ and $\mathbf{H}_r \in \C^{K_r\times N}$ denote the RIS-to-transmission-user and RIS-to-reflection-user channels, respectively. Both the i.i.d. Rayleigh and the Saleh-Valenzuela (SV) channel models are considered~\cite{b26}. For the SV channel, the BS-to-RIS channel is modeled as
\begin{equation}
\mathbf{G} = \sqrt{\frac{N N_t}{N_c N_{\mathrm{ray}}}} \sum_{c=1}^{N_c} \sum_{\ell=1}^{N_{\mathrm{ray}}} \alpha_{c,\ell}\, \mathbf{a}_R(\varphi_{c,\ell}) \mathbf{a}_T^{H}(\psi_{c,\ell}),
\label{eq:svG}
\end{equation}
where $N_c$ and $N_{\mathrm{ray}}$ denote the numbers of clusters and rays per cluster, respectively, $\alpha_{c,\ell}\sim\mathcal{CN}(0,1)$ is the complex path gain, $\varphi_{c,\ell}$ and $\psi_{c,\ell}$ are the angle of arrival (AoA) at the STAR-RIS and the angle of departure (AoD) at the BS, respectively, and $\mathbf{a}_R \in \C^{N}$ and $\mathbf{a}_T \in \C^{N_t}$ are the steering vectors of the $N$-element and $N_t$-element half-wavelength-spaced uniform linear arrays (ULAs), respectively, with
\begin{equation}
\mathbf{a}_M(\varphi) = \frac{1}{\sqrt{M}} \big[1, e^{j\pi\sin\varphi}, \dots, e^{j\pi(M-1)\sin\varphi}\big]^T.
\label{eq:steering}
\end{equation}
The RIS-to-user channel admits the analogous SV form, i.e., the channel between the STAR-RIS and the $k$-th user is given by
\begin{equation}
\mathbf{h}_k = \sqrt{\frac{N}{N_c N_{\mathrm{ray}}}} \sum_{c=1}^{N_c} \sum_{\ell=1}^{N_{\mathrm{ray}}} \alpha_{c,\ell,k}\, \mathbf{a}_N(\varphi_{c,\ell,k}),
\label{eq:svh}
\end{equation}
where $\mathbf{h}_k^{H}$ is the $k$-th row of $\mathbf{H}_t$ or $\mathbf{H}_r$. For the i.i.d. Rayleigh channel, all entries of $\mathbf{G}$, $\mathbf{H}_t$, and $\mathbf{H}_r$ are drawn independently from $\mathcal{CN}(0,1)$. All channels are normalized to unit variance, with the large-scale path loss absorbed into the transmit SNR. The effective channel from the BS to the $k$-th user is the $k$-th row of the composite channel matrix
\begin{equation}
\mathbf{A} = \begin{bmatrix}
\mathbf{H}_t \diag(\mathbf{v}_t) \mathbf{G} \\
\mathbf{H}_r \diag(\mathbf{v}_r) \mathbf{G}
\end{bmatrix} \in \C^{K \times N_t},
\label{eq:A}
\end{equation}
where the first $K_t$ rows correspond to the transmission users and the remaining $K_r$ rows correspond to the reflection users. Let $\mathbf{w}_k \in \C^{N_t}$ denote the transmit precoding vector for the $k$-th user. The received signal at the $k$-th user is
\begin{equation}
y_k = \mathbf{a}_k^H \mathbf{w}_k s_k + \sum_{j\neq k} \mathbf{a}_k^H \mathbf{w}_j s_j + n_k,
\label{eq:rx}
\end{equation}
where $\mathbf{a}_k^H$ is the $k$-th row of $\mathbf{A}$, $s_k$ is the data symbol for the $k$-th user with $\mathbb{E}[|s_k|^2]=1$, and $n_k \sim \mathcal{CN}(0,\sigma^2)$ is the additive white Gaussian noise. Accordingly, the signal-to-interference-plus-noise ratio (SINR) of the $k$-th user is
\begin{equation}
\gamma_k = \frac{|\mathbf{a}_k^H \mathbf{w}_k|^2}{\sum_{j\neq k} |\mathbf{a}_k^H \mathbf{w}_j|^2 + \sigma^2}.
\label{eq:sinr}
\end{equation}
The WSR is then defined as
\begin{equation}
R(\mathbf{W}, \bm{\Phi}) = \sum_{k=1}^{K} \omega_k \log_2 (1 + \gamma_k),
\label{eq:wsr}
\end{equation}
where $\omega_k > 0$ is the priority weight of the $k$-th user, $\mathbf{W} = [\mathbf{w}_1,\dots,\mathbf{w}_K]$ collects the precoding vectors, and $\bm{\Phi}$ collects all STAR-RIS coefficients. Our objective is to maximize the WSR subject to the transmit-power and energy-conservation constraints, which is formulated as
\begin{subequations}\label{eq:p1}
\begin{align}
& \max_{\mathbf{W}, \bm{\beta}_t, \bm{\beta}_r, \bm{\theta}_t, \bm{\theta}_r} \; R(\mathbf{W}, \bm{\Phi}) \label{eq:p1_obj} \\
& \text{s.t.} \;\; \sum_{k=1}^{K}\|\mathbf{w}_k\|^2 \le P_{\max}, \label{eq:p1_c1} \\
& \qquad \beta_{t,n}^2 + \beta_{r,n}^2 = 1, \ \beta_{t,n}, \beta_{r,n} \ge 0, \ \forall n, \label{eq:p1_c2} \\
& \qquad \theta_{t,n}, \theta_{r,n} \in [0, 2\pi), \ \forall n. \label{eq:p1_c3}
\end{align}
\end{subequations}

Problem~\eqref{eq:p1} is non-convex due to the fractional form of the SINR~\eqref{eq:sinr}, the unit energy constraint~\eqref{eq:p1_c2}, and the strong coupling between the precoder and the STAR-RIS coefficients, which renders conventional convex optimization inapplicable. To gain further insight, consider the Lagrangian associated with the power constraint~\eqref{eq:p1_c1}:
\begin{equation}
\mathcal{L}(\mathbf{W}, \bm{\Phi}, \mu) = -R(\mathbf{W}, \bm{\Phi}) + \mu\Big(\textstyle\sum_{k} \|\mathbf{w}_k\|^2 - P_{\max}\Big),
\label{eq:lagrangian}
\end{equation}
where $\mu \ge 0$ is the dual variable. The stationarity condition $\partial \mathcal{L}/\partial \mathbf{w}_k = \mathbf{0}$ couples all precoders through the interference terms of~\eqref{eq:sinr}, so no closed-form precoder exists in general, and the dual gap is generally nonzero owing to the non-convexity. We therefore avoid Lagrangian dualization and instead enforce the constraints exactly by reparameterization, as detailed in Section III.

\begin{figure}[t]
\centering
\resizebox{\columnwidth}{!}{%
\begin{tikzpicture}[font=\footnotesize]
\def\person#1#2#3{%
  \fill[#3!15] (#1,#2-0.52) ellipse (0.24 and 0.06);
  \fill[#3] (#1,#2) circle (0.13);
  \draw[#3, thick] (#1-0.18,#2-0.28) -- (#1+0.18,#2-0.28);
  \draw[#3, thick] (#1,#2-0.28) -- (#1,#2-0.50);
  \draw[#3, thick] (#1,#2-0.50) -- (#1-0.15,#2-0.66);
  \draw[#3, thick] (#1,#2-0.50) -- (#1+0.15,#2-0.66);
}

\draw[risgray, thick] (0.4,0.35) -- (8.2,0.35);

\fill[risgrayf, draw=risgray, thick] (0.5,0.35) rectangle (1.3,1.0);
\node at (0.9,0.67) {\textbf{BS}};
\draw[risgray, thick] (0.9,1.0) -- (0.9,2.3);
\fill[risblue] (0.6,1.95) rectangle (1.2,2.35);
\draw[white, thick] (0.75,1.95) -- (0.75,2.35);
\draw[white, thick] (0.9,1.95) -- (0.9,2.35);
\draw[white, thick] (1.05,1.95) -- (1.05,2.35);
\node[anchor=south, font=\scriptsize] at (0.9,2.42) {$N_t$};

\fill[risgrayf] (2.2,0.35) rectangle (3.4,2.1);
\draw[risgray, thick] (2.2,0.35) rectangle (3.4,2.1);
\foreach \wy in {0.6,0.9,1.2,1.5,1.8}{
  \foreach \wx in {2.4,2.7,3.0}{
    \fill[white] (\wx,\wy) rectangle (\wx+0.14,\wy+0.16);
  }
}
\node at (2.8,2.4) {\textbf{Obstacle}};

\fill[risbluef] (4.9,1.7) rectangle (6.3,4.5);
\draw[risblue, thick] (4.9,1.7) rectangle (6.3,4.5);
\foreach \i in {0,...,4}{
  \foreach \j in {0,...,9}{
    \fill[risblue] (5.0+0.24*\i, 1.78+0.27*\j) rectangle (5.16+0.24*\i, 1.92+0.27*\j);
  }
}
\draw[risgray, thick] (5.6,1.7) -- (5.6,0.35);
\node[anchor=west] at (6.45,3.25) {\textbf{STAR-RIS}};
\node[anchor=west] at (6.45,2.75) {$N$ elements};

\person{3.8}{0.98}{risorange}
\person{4.3}{0.98}{risorange}
\node[anchor=west] at (4.0,1.05) {$R_1$};
\node[anchor=west] at (4.5,1.05) {$R_2$};

\person{7.4}{0.98}{risgreen}
\person{7.95}{0.98}{risgreen}
\node[anchor=west] at (7.6,1.05) {$T_1$};
\node[anchor=west] at (8.15,1.05) {$T_2$};

\draw[->, thick, draw=risblue] (1.0,2.35) -- (4.9,3.0);
\node at (2.7,2.9) {$\mathbf{G}$};

\draw[->, thick, draw=risgreen] (6.3,3.0) -- (7.4,0.98);
\draw[->, thick, draw=risgreen] (6.3,3.0) -- (7.95,0.98);
\node at (7.0,2.3) {$\mathbf{h}_{t,k}$};

\draw[->, thick, draw=risorange] (4.9,1.9) -- (3.8,0.98);
\draw[->, thick, draw=risorange] (4.9,1.9) -- (4.3,0.98);
\node at (4.15,1.8) {$\mathbf{h}_{r,k}$};

\draw[thick, dashed, draw=risgray] (0.9,2.35) -- (2.2,2.0);
\draw[thick, draw=red] (2.13,1.83) -- (2.27,2.17);
\draw[thick, draw=red] (2.27,1.83) -- (2.13,2.17);
\node[rotate=-15, font=\scriptsize, text=risgray] at (1.62,2.32) {Blocked};

\node[text=risorange, anchor=west] at (0.45,4.95) {Reflection region};
\node[text=risgreen, anchor=east] at (8.25,4.95) {Transmission region};

\end{tikzpicture}%
}
\caption{Downlink STAR-RIS aided multi-user communication system, where the BS serves the transmission users $T_1,T_2$ and the reflection users $R_1,R_2$ exclusively through the STAR-RIS, since the direct BS--user links are blocked by the obstacle.}
\label{fig:sys}
\end{figure}

\section{Equivalent Transformations of the Original Problem}
\vspace{1.0em}

In this section, we transform the original non-convex problem~\eqref{eq:p1} into a sequence of equivalent tractable forms, which lays the foundation for the proposed PSA-GML algorithm.

\subsection{Amplitude-Split Parameterization}
The energy-conservation constraint~\eqref{eq:p1_c2} couples the transmission and reflection amplitudes of each element. The following lemma shows that this constraint can be absorbed by a smooth amplitude-split parameterization.
\begin{lemma}\label{lem:amp}
For any feasible amplitude pair $(\beta_{t,n}, \beta_{r,n})$ satisfying $\beta_{t,n}^2 + \beta_{r,n}^2 = 1$ with $\beta_{t,n}, \beta_{r,n} \ge 0$, there exists a unique angle $\varphi_n \in [0, \pi/2]$ such that $\beta_{t,n} = \cos\varphi_n$ and $\beta_{r,n} = \sin\varphi_n$, and vice versa. Hence, the constraint~\eqref{eq:p1_c2} is equivalent to the amplitude-split parameterization
\begin{equation}
\beta_{t,n} = \cos\varphi_n, \quad \beta_{r,n} = \sin\varphi_n, \quad \varphi_n \in [0, \pi/2],
\label{eq:phi}
\end{equation}
which absorbs the energy-conservation constraint automatically.
\end{lemma}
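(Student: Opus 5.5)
The plan is to prove the two directions of the equivalence separately, and to obtain uniqueness from the monotonicity of cosine on $[0,\pi/2]$.

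\emph{Converse direction (parameterization to constraint).} This is immediate. For any $\varphi_n \in [0,\pi/2]$, the Pythagorean identity gives $\cos^2\varphi_n + \sin^2\varphi_n = 1$. Both functions are nonnegative on the first quadrant. Hence the pair $(\cos\varphi_n,\sin\varphi_n)$ satisfies~\eqref{eq:p1_c2}.

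\emph{Forward direction (existence).} Start from a feasible pair $(\beta_{t,n},\beta_{r,n})$.
\begin{itemize}
\item From $\beta_{t,n}^2 = 1-\beta_{r,n}^2 \le 1$ and $\beta_{t,n}\ge 0$, we get $\beta_{t,n}\in[0,1]$.
\item Define $\varphi_n = \arccos\beta_{t,n}$, which lies in $[0,\pi/2]$ because $\beta_{t,n}\in[0,1]$. By construction, $\cos\varphi_n = \beta_{t,n}$.
\item Then $\sin\varphi_n = \sqrt{1-\cos^2\varphi_n} = \sqrt{1-\beta_{t,n}^2} = \sqrt{\beta_{r,n}^2} = \beta_{r,n}$. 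The outer square root is the nonnegative one since $\sin\varphi_n\ge 0$ on $[0,\pi/2]$. The last equality uses $\beta_{r,n}\ge 0$.
\end{itemize}

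\emph{Uniqueness.} This is the only step needing care, but it is elementary. Cosine is strictly decreasing on $[0,\pi/2]$, hence injective there. So two angles $\varphi_n,\varphi_n'$ in this interval with $\cos\varphi_n=\cos\varphi_n'=\beta_{t,n}$ must coincide. Equivalently, the map $\varphi\mapsto(\cos\varphi,\sin\varphi)$ is a bijection from $[0,\pi/2]$ onto the closed quarter unit circle in the nonnegative orthant.

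\emph{Conclusion.} Apply this bijection independently for each $n=1,\dots,N$. The feasible set of~\eqref{eq:p1_c2} is then exactly the image of the box $[0,\pi/2]^N$ under~\eqref{eq:phi}. Therefore replacing $(\bm{\beta}_t,\bm{\beta}_r)$ by $\bm{\varphi}=[\varphi_1,\dots,\varphi_N]^T$ in~\eqref{eq:p1} leaves the optimal value and the set of optimal solutions unchanged. It also removes the equality constraint, leaving only a simple box constraint. I expect no genuine obstacle beyond the sign bookkeeping in the square-root step.
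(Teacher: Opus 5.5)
Your proof is correct and takes essentially the same route as the paper: construct the angle explicitly with an inverse trigonometric function, and use the Pythagorean identity for the converse. The differences are minor and in your favor: you take $\varphi_n=\arccos\beta_{t,n}$ where the paper takes $\arctan(\beta_{r,n}/\beta_{t,n})$ with a separate case for $\beta_{t,n}=0$, and you explicitly justify uniqueness (strict monotonicity of cosine on $[0,\pi/2]$) and nonnegativity in the converse, both of which the paper leaves implicit.
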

\begin{IEEEproof}
Since $\beta_{t,n}^2 + \beta_{r,n}^2 = 1$ and $\beta_{t,n}, \beta_{r,n} \ge 0$, the point $(\beta_{t,n}, \beta_{r,n})$ lies on the unit circle in the first quadrant. Therefore, there exists a unique angle $\varphi_n = \arctan(\beta_{r,n}/\beta_{t,n}) \in [0, \pi/2]$ (with $\varphi_n = \pi/2$ when $\beta_{t,n} = 0$) satisfying $\beta_{t,n} = \cos\varphi_n$ and $\beta_{r,n} = \sin\varphi_n$. Conversely, for any $\varphi_n \in [0, \pi/2]$, the identity $\cos^2\varphi_n + \sin^2\varphi_n = 1$ holds, so the resulting pair is feasible. \hfill $\blacksquare$
\end{IEEEproof}

With Lemma~\ref{lem:amp}, the optimization variables reduce to the amplitude-split angles $\bm{\varphi} = [\varphi_1,\dots,\varphi_N]^T$, the phase shifts $\bm{\theta}_t,\bm{\theta}_r$, and the precoder $\mathbf{W}$, and problem~\eqref{eq:p1} is equivalently reformulated as
\begin{subequations}\label{eq:p2}
\begin{align}
& \max_{\mathbf{W}, \bm{\varphi}, \bm{\theta}_t, \bm{\theta}_r} \; R(\mathbf{W}, \bm{\Phi}) \label{eq:p2_obj} \\
& \text{s.t.} \;\; \sum_{k=1}^{K}\|\mathbf{w}_k\|^2 \le P_{\max}, \label{eq:p2_c1} \\
& \qquad \varphi_n \in [0, \pi/2], \ \theta_{t,n}, \theta_{r,n} \in [0, 2\pi), \ \forall n. \label{eq:p2_c2}
\end{align}
\end{subequations}

\subsection{Closed-Form Zero-Forcing Precoder}
For a fixed set of STAR-RIS coefficients, the zero-forcing (ZF) precoder admits a closed form and nulls all inter-user interference, as stated below.
\begin{proposition}\label{prop:zf}
For a fixed composite channel $\mathbf{A} \in \C^{K \times N_t}$ with full row rank (which holds when $K \le N_t$), the zero-forcing (ZF) precoder
\begin{equation}
\mathbf{W}_{\mathrm{ZF}} = \mathbf{A}^H (\mathbf{A}\mathbf{A}^H)^{-1},
\label{eq:zf}
\end{equation}
satisfies $\mathbf{a}_k^H \mathbf{w}_j = 0$ for all $j \neq k$, and reduces the SINR~\eqref{eq:sinr} to the interference-free form $\gamma_k = |\mathbf{a}_k^H \mathbf{w}_k|^2/\sigma^2$.
\end{proposition}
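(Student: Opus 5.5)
The plan is to verify directly that $\mathbf{W}_{\mathrm{ZF}}$ is a right inverse of $\mathbf{A}$, and then read off the zero-interference property entry by entry. The first step is well-definedness. Since $\mathbf{A}$ has full row rank $K$, the Gram matrix $\mathbf{A}\mathbf{A}^H \in \C^{K\times K}$ is Hermitian positive definite. Indeed, for any $\mathbf{x}\neq\mathbf{0}$ we have $\mathbf{x}^H\mathbf{A}\mathbf{A}^H\mathbf{x} = \|\mathbf{A}^H\mathbf{x}\|^2 > 0$, because $\mathbf{A}^H$ has trivial null space when the rows of $\mathbf{A}$ are linearly independent. Hence $(\mathbf{A}\mathbf{A}^H)^{-1}$ exists and \eqref{eq:zf} is well defined.

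On the parenthetical claim that full row rank "holds when $K\le N_t$": I would state it as a generic (almost-sure) property rather than a deterministic one. The condition $K\le N_t$ is necessary, but full rank also requires that $\mathbf{H}_t\diag(\mathbf{v}_t)\mathbf{G}$ and $\mathbf{H}_r\diag(\mathbf{v}_r)\mathbf{G}$ jointly have rank $K$. This in turn needs $N\ge K$-type conditions and nondegenerate coefficients, for example not $\varphi_n=0$ for all $n$, which would make the reflection rows vanish. For continuously distributed channels such as i.i.d. Rayleigh, rank deficiency is a measure-zero event. I expect this qualification to be the only delicate point, and I would handle it with a remark rather than a proof, since the proposition itself assumes full row rank.

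The core computation is short. Using associativity,
\begin{equation*}
\mathbf{A}\mathbf{W}_{\mathrm{ZF}} = (\mathbf{A}\mathbf{A}^H)(\mathbf{A}\mathbf{A}^H)^{-1} = \mathbf{I}_K .
\end{equation*}
The $(k,j)$ entry of $\mathbf{A}\mathbf{W}_{\mathrm{ZF}}$ is $\mathbf{a}_k^H\mathbf{w}_j$, because $\mathbf{a}_k^H$ is the $k$-th row of $\mathbf{A}$ and $\mathbf{w}_j$ is the $j$-th column of $\mathbf{W}_{\mathrm{ZF}}$. Therefore $\mathbf{a}_k^H\mathbf{w}_j=\delta_{kj}$, which gives $\mathbf{a}_k^H\mathbf{w}_j=0$ for all $j\neq k$.

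Substituting into \eqref{eq:sinr}, every interference term $|\mathbf{a}_k^H\mathbf{w}_j|^2$ with $j\neq k$ vanishes, so $\gamma_k=|\mathbf{a}_k^H\mathbf{w}_k|^2/\sigma^2$. In this unscaled case the value is $1/\sigma^2$. I would also note that the interference-free property is preserved under any column scaling $\mathbf{W}_{\mathrm{ZF}}\mathbf{P}^{1/2}$ with diagonal $\mathbf{P}\succeq\mathbf{0}$, since $\mathbf{a}_k^H\mathbf{w}_j p_j^{1/2}$ is still zero for $j\neq k$. This lets the power constraint \eqref{eq:p2_c1} be met by a power allocation without affecting the statement.
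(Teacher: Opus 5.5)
Your proposal is correct and follows the same route as the paper. Both arguments show $\mathbf{A}\mathbf{W}_{\mathrm{ZF}}=\mathbf{I}_K$, read off $\mathbf{a}_k^H\mathbf{w}_j=\delta_{kj}$ entrywise, and substitute into~\eqref{eq:sinr}; your extras (positive definiteness of $\mathbf{A}\mathbf{A}^H$ for well-definedness, and the caveat that $K\le N_t$ is necessary but not sufficient for full row rank, e.g.\ all $\varphi_n=0$ zeroes the reflection rows) are accurate refinements that the paper's proof omits.
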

\begin{IEEEproof}
Substituting~\eqref{eq:zf} into the composite channel yields $\mathbf{A}\mathbf{W}_{\mathrm{ZF}} = \mathbf{A}\mathbf{A}^H(\mathbf{A}\mathbf{A}^H)^{-1} = \mathbf{I}_K$, where $\mathbf{I}_K$ is the $K\times K$ identity matrix. Consequently, the $(k,j)$-th entry satisfies $\mathbf{a}_k^H\mathbf{w}_j = [\mathbf{A}\mathbf{W}_{\mathrm{ZF}}]_{k,j} = \delta_{k,j}$, which vanishes for $j\neq k$ and equals one for $j = k$. Substituting into~\eqref{eq:sinr} gives $\gamma_k = |\mathbf{a}_k^H\mathbf{w}_k|^2/\sigma^2$. \hfill $\blacksquare$
\end{IEEEproof}
Note that in practical use the ZF precoder is power-normalized to satisfy the power constraint~\eqref{eq:p1_c1}, i.e., $\mathbf{W}_{\mathrm{ZF}}\leftarrow\sqrt{P_{\max}}\,\mathbf{W}_{\mathrm{ZF}}/\|\mathbf{W}_{\mathrm{ZF}}\|_F$, which scales the useful signal $\mathbf{a}_k^H\mathbf{w}_k$ uniformly and preserves the zero-forcing property.

\subsection{Collapsed Precoder Representation}
To jointly optimize the STAR-RIS coefficients and the precoder with a reduced search dimension, we adopt the collapsed precoder representation.
\begin{lemma}\label{lem:collapse}
For a fixed composite channel $\mathbf{A} \in \C^{K\times N_t}$ with full row rank, it is without loss of optimality to restrict the transmit precoder to the collapsed form
\begin{equation}
\mathbf{W} = \mathbf{A}^H \mathbf{X},
\label{eq:collapse}
\end{equation}
where $\mathbf{X} \in \C^{K\times K}$ is a collapsed variable, since the null-space component of $\mathbf{W}$ does not affect the SINR and only increases the transmit power.
\end{lemma}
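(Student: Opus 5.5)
We decompose an arbitrary feasible precoder into components along the row space of $\mathbf{A}$ and along its null space, and show the null-space component can be removed without changing the WSR and without increasing the transmit power. Fix $\mathbf{A}$ with full row rank, so that $\mathbf{A}\mathbf{A}^H$ is invertible, exactly as in Proposition~\ref{prop:zf}. Define the orthogonal projector onto the column space of $\mathbf{A}^H$ as $\mathbf{P} = \mathbf{A}^H(\mathbf{A}\mathbf{A}^H)^{-1}\mathbf{A}$, and write $\mathbf{P}^\perp = \mathbf{I}_{N_t} - \mathbf{P}$. Any $\mathbf{W}$ satisfying the power constraint then splits as
\[
\mathbf{W} = \mathbf{P}\mathbf{W} + \mathbf{P}^\perp\mathbf{W}, \qquad \mathbf{P}\mathbf{W} = \mathbf{A}^H\mathbf{X}, \quad \mathbf{X} = (\mathbf{A}\mathbf{A}^H)^{-1}\mathbf{A}\mathbf{W} \in \C^{K\times K}.
\]

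The first step is to show that the SINRs are unchanged when $\mathbf{W}$ is replaced by $\widetilde{\mathbf{W}} = \mathbf{A}^H\mathbf{X}$. Since $\mathbf{A}\mathbf{P}^\perp = \mathbf{A} - \mathbf{A} = \mathbf{0}$, we have $\mathbf{A}\widetilde{\mathbf{W}} = \mathbf{A}\mathbf{W}$. Hence every scalar $\mathbf{a}_k^H\mathbf{w}_j = [\mathbf{A}\mathbf{W}]_{k,j}$ appearing in~\eqref{eq:sinr} is unchanged, for both the desired term and the interference terms. Consequently all $\gamma_k$, and therefore $R(\mathbf{W},\bm{\Phi})$ in~\eqref{eq:wsr}, are identical for $\mathbf{W}$ and $\widetilde{\mathbf{W}}$.

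The second step is feasibility. Because $\mathbf{P}$ is Hermitian and idempotent, the two components are orthogonal, which gives $\|\mathbf{W}\|_F^2 = \|\mathbf{P}\mathbf{W}\|_F^2 + \|\mathbf{P}^\perp\mathbf{W}\|_F^2 \ge \|\widetilde{\mathbf{W}}\|_F^2$. Since $\sum_k\|\mathbf{w}_k\|^2 = \|\mathbf{W}\|_F^2$, the collapsed precoder $\widetilde{\mathbf{W}}$ also satisfies~\eqref{eq:p2_c1}, and it uses strictly less power whenever $\mathbf{P}^\perp\mathbf{W}\neq\mathbf{0}$. It follows that for every feasible $\mathbf{W}$ there is a feasible precoder of the form~\eqref{eq:collapse} achieving the same objective. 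The supremum over the restricted set therefore equals the supremum over the full set, and any optimizer maps to an optimizer of collapsed form. Conversely, each $\mathbf{A}^H\mathbf{X}$ that meets the power constraint is itself feasible, so the restriction introduces no spurious points.

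The only delicate point is verifying the projector identities, $\mathbf{P}^H=\mathbf{P}$, $\mathbf{P}^2=\mathbf{P}$ and $\mathbf{A}\mathbf{P}=\mathbf{A}$. All three follow directly from the invertibility of $\mathbf{A}\mathbf{A}^H$, and that invertibility is exactly where the full-row-rank assumption is used. As a remark, the ZF precoder of Proposition~\ref{prop:zf} is the special case $\mathbf{X} = (\mathbf{A}\mathbf{A}^H)^{-1}$, which confirms that the collapsed parameterization contains it.
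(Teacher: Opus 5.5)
Your proof is correct and follows essentially the same route as the paper: split $\mathbf{W}$ into a $\mathrm{col}(\mathbf{A}^H)$ component $\mathbf{A}^H\mathbf{X}$ and a $\mathrm{null}(\mathbf{A})$ component, note that $\mathbf{A}\mathbf{W}$, and hence every SINR, is unchanged when the latter is dropped, and use Pythagoras to show the power can only decrease. The only difference is that you build the split explicitly with the projector $\mathbf{A}^H(\mathbf{A}\mathbf{A}^H)^{-1}\mathbf{A}$, which is where you use full row rank; the paper simply asserts the orthogonal decomposition, and that decomposition in fact holds for any $\mathbf{A}$.
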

\begin{IEEEproof}
Any precoder admits the orthogonal decomposition $\mathbf{W} = \mathbf{A}^H\mathbf{X} + \mathbf{W}_{\perp}$ with $\mathbf{W}_{\perp} \in \mathrm{null}(\mathbf{A})$, i.e., $\mathbf{A}\mathbf{W}_{\perp} = \mathbf{0}$. Then $\mathbf{A}\mathbf{W} = \mathbf{A}\mathbf{A}^H\mathbf{X}$, so the received signal~\eqref{eq:rx} and the SINR~\eqref{eq:sinr} depend on $\mathbf{X}$ alone and are independent of $\mathbf{W}_{\perp}$. On the other hand, since $\mathbf{A}^H\mathbf{X} \in \mathrm{col}(\mathbf{A}^H)$ is orthogonal to $\mathrm{null}(\mathbf{A})$, the transmit power satisfies $\|\mathbf{W}\|_F^2 = \|\mathbf{A}^H\mathbf{X}\|_F^2 + \|\mathbf{W}_{\perp}\|_F^2$, which is minimized by $\mathbf{W}_{\perp} = \mathbf{0}$. Therefore, for any feasible $\mathbf{W}$, the collapsed form~\eqref{eq:collapse} achieves the same WSR with no larger transmit power. \hfill $\blacksquare$
\end{IEEEproof}
\begin{remark}
Lemma~\ref{lem:collapse} reduces the dimension of the precoder optimization from $N_t K$ to $K^2$ complex entries, which is significant when the number of BS antennas $N_t$ is large. The collapsed variable is initialized as the ZF solution $\mathbf{X}^{(0)} = (\mathbf{A}\mathbf{A}^H)^{-1}$.
\end{remark}

With Lemma~\ref{lem:collapse}, problem~\eqref{eq:p2} is further reformulated as
\begin{subequations}\label{eq:p3}
\begin{align}
& \max_{\mathbf{X}, \bm{\varphi}, \bm{\theta}_t, \bm{\theta}_r} \; R(\mathbf{A}^H\mathbf{X}, \bm{\Phi}) \label{eq:p3_obj} \\
& \text{s.t.} \;\; \|\mathbf{A}^H\mathbf{X}\|_F^2 \le P_{\max}, \label{eq:p3_c1} \\
& \qquad \varphi_n \in [0, \pi/2], \ \theta_{t,n}, \theta_{r,n} \in [0, 2\pi), \ \forall n. \label{eq:p3_c2}
\end{align}
\end{subequations}

\subsection{Unconstrained Reparameterization}
To enable end-to-end gradient propagation, the bounded amplitude-split angles are reparameterized through the sigmoid function.
\begin{proposition}\label{prop:sigmoid}
The box constraint $\varphi_n \in [0, \pi/2]$ is enforced by the unconstrained reparameterization
\begin{equation}
\varphi_n = \frac{\pi}{2} \varsigma(\tilde{\varphi}_n) = \frac{\pi/2}{1 + e^{-\tilde{\varphi}_n}},
\label{eq:sigmoid}
\end{equation}
where $\tilde{\varphi}_n \in \R$ is an unconstrained variable and $\varsigma(\cdot)$ denotes the sigmoid function. With~\eqref{eq:sigmoid}, the composite channel~\eqref{eq:A} becomes fully differentiable with respect to $\tilde{\bm{\varphi}}$, $\bm{\theta}_t$, and $\bm{\theta}_r$.
\end{proposition}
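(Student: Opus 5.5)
The argument splits into two parts: the sigmoid map enforces the box constraint, and the composite channel is a composition of smooth maps.

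\emph{Box constraint.} The sigmoid $\varsigma(x)=1/(1+e^{-x})$ is continuous and strictly increasing on $\R$, since $\varsigma'(x)=\varsigma(x)(1-\varsigma(x))>0$. Its limits are $\varsigma(x)\to 0$ as $x\to-\infty$ and $\varsigma(x)\to 1$ as $x\to+\infty$. Hence $\varphi_n=\frac{\pi}{2}\varsigma(\tilde{\varphi}_n)$ is a smooth bijection from $\R$ onto the open interval $(0,\pi/2)$, with explicit inverse $\tilde{\varphi}_n=\ln\!\big(\varphi_n/(\pi/2-\varphi_n)\big)$. So every $\tilde{\varphi}_n\in\R$ yields a feasible $\varphi_n$, and every interior point of the box is attained.

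The endpoints $0$ and $\pi/2$ are reached only in the limit $\tilde{\varphi}_n\to\mp\infty$. This is the only delicate point, and I will handle it by continuity. The objective of~\eqref{eq:p3} depends continuously on $\varphi_n$ through $\cos\varphi_n$ and $\sin\varphi_n$. Therefore the supremum over the open box equals the supremum over the closed box $[0,\pi/2]$, and the endpoint values (pure transmission or pure reflection) are approached arbitrarily closely. I will state explicitly that this is the sense in which the constraint is ``enforced.'' Combined with Lemma~\ref{lem:amp}, the energy-conservation constraint then holds for every real $\tilde{\varphi}_n$.

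\emph{Differentiability.} Substituting~\eqref{eq:phi} and~\eqref{eq:sigmoid} into~\eqref{eq:coeff} gives
\[
v_{t,n}=\cos\!\big(\tfrac{\pi}{2}\varsigma(\tilde{\varphi}_n)\big)e^{j\theta_{t,n}}, \qquad v_{r,n}=\sin\!\big(\tfrac{\pi}{2}\varsigma(\tilde{\varphi}_n)\big)e^{j\theta_{r,n}}.
\]
Each of these is a composition of $C^\infty$ maps in the real variables $(\tilde{\varphi}_n,\theta_{t,n},\theta_{r,n})$. The composite channel $\mathbf{A}$ in~\eqref{eq:A} is linear in $\diag(\mathbf{v}_t)$ and $\diag(\mathbf{v}_r)$ for fixed $\mathbf{G},\mathbf{H}_t,\mathbf{H}_r$. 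Hence every entry of $\mathbf{A}$, viewed through its real and imaginary parts, is a smooth function of $(\tilde{\bm{\varphi}},\bm{\theta}_t,\bm{\theta}_r)$.

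For use in the gradient stage I will also record the chain-rule derivatives:
\[
\frac{\partial v_{t,n}}{\partial\tilde{\varphi}_n}=-\tfrac{\pi}{2}\varsigma'(\tilde{\varphi}_n)\sin\varphi_n\, e^{j\theta_{t,n}}, \qquad \frac{\partial v_{t,n}}{\partial\theta_{t,n}}=j v_{t,n},
\]
and analogously for $v_{r,n}$. Periodicity in $\theta$ makes the phase constraint $\theta\in[0,2\pi)$ harmless, since any real value can be reduced modulo $2\pi$.
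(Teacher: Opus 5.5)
Your proposal is correct and follows the same route as the paper. The sigmoid is a strictly increasing smooth bijection of $\mathbb{R}$ onto $(0,1)$, so $\frac{\pi}{2}\varsigma(\cdot)$ maps onto $(0,\pi/2)$, and differentiability of $\mathbf{A}$ follows by composing smooth maps. Your version is slightly more careful than the paper's, which calls every $\varphi_n\in[0,\pi/2]$ ``reachable'' even though the endpoints are only approached in the limit; you instead justify the enforcement claim by the supremum-equality argument from continuity of the objective, and you make the $\bm{\theta}_t,\bm{\theta}_r$ dependence and the chain-rule derivatives explicit.
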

\begin{IEEEproof}
The sigmoid function $\varsigma(x) = 1/(1+e^{-x})$ is a strictly increasing bijection from $\R$ onto the open interval $(0,1)$, so $\frac{\pi}{2}\varsigma(\tilde{\varphi}_n)$ is a bijection from $\R$ onto $(0,\pi/2)$. Hence, every feasible $\varphi_n \in [0,\pi/2]$ is reachable (the endpoints $0$ and $\pi/2$ are approached in the limit $|\tilde{\varphi}_n|\to\infty$), and every $\tilde{\varphi}_n$ produces a feasible angle. Since the map is smooth, the composite channel~\eqref{eq:A} is differentiable in $\tilde{\bm{\varphi}}$. \hfill $\blacksquare$
\end{IEEEproof}
Note that the phase shifts $\bm{\theta}_t$ and $\bm{\theta}_r$ require no reparameterization, since they enter the composite channel~\eqref{eq:A} only through $e^{j\theta_{t,n}}$ and $e^{j\theta_{r,n}}$, which are $2\pi$-periodic; any real-valued phase is therefore equivalent to one in $[0,2\pi)$, so the phases can be optimized directly as unconstrained variables.

\section{Proposed PSA-GML Algorithm}
\vspace{1.0em}

In this section, we present the proposed PSA-GML algorithm, which combines global exploration and local refinement. The key observation is that the non-convex WSR landscape of~\eqref{eq:p3} contains many local optima, so a purely gradient-based method is sensitive to initialization, whereas a purely heuristic method does not exploit gradient information. The proposed algorithm therefore consists of two stages: a PSO-based global warm start (Stage 1), followed by a gradient meta-learning refinement (Stage 2).

\begin{figure*}[t]
\centering
\begin{tikzpicture}[
  conn/.style={draw, ->, thick},
  inode/.style={circle, draw, thick, fill=white, inner sep=1.5pt, font=\small},
  gate/.style={rectangle, rounded corners=2pt, draw=risgray, fill=risgrayf, thick,
               minimum width=1.35cm, minimum height=0.72cm, align=center, inner sep=1.5pt, font=\footnotesize},
  op/.style={circle, draw=risgray, fill=white, thick, inner sep=1.5pt, font=\small},
]

\node[inode, draw=risamber, text=risamber] (cprev) at (0,5.3) {$c_d^{(s-1)}$};
\node[op] (xf) at (2.4,5.3) {$\odot$};
\node[op] (add) at (5.0,5.3) {$+$};
\node[inode, draw=risamber, text=risamber] (cnow) at (7.0,5.3) {$c_d^{s}$};
\node[gate] (tanhc) at (9.0,5.3) {$\tanh$};
\node[op] (xo) at (11.0,5.3) {$\odot$};
\node[inode, draw=risgreen, text=risgreen] (hnow) at (13.0,5.3) {$h_d^{s}$};

\node[gate] (gf) at (2.4,3.0) {$f=\varsigma$};
\node[gate] (gi) at (4.0,3.0) {$i=\varsigma$};
\node[gate] (gc) at (6.0,3.0) {$\tilde{c}=\tanh$};
\node[gate] (go) at (11.0,3.0) {$o=\varsigma$};

\node[op] (xi) at (5.0,4.2) {$\odot$};

\node[inode, draw=risblue, text=risblue] (g) at (0,0.6) {$g_d$};
\node[inode, draw=risgreen, text=risgreen] (hprev) at (0,1.7) {$h_d^{(s-1)}$};
\node[risgray, font=\footnotesize] at (0,0.06) {gradient};
\node[risgray, font=\footnotesize, anchor=east] at (-0.72,1.7) {hidden};

\coordinate (M) at (1.3,1.15);
\draw[->, risblue] (g) -- (M);
\draw[->, risgreen] (hprev) -- (M);
\draw[thick] (M) -- (11.0,1.15);
\draw[->, thick] (2.4,1.15) -- (2.4,2.62);
\draw[->, thick] (4.0,1.15) -- (4.0,2.62);
\draw[->, thick] (6.0,1.15) -- (6.0,2.62);
\draw[->, thick] (11.0,1.15) -- (11.0,2.62);
\node[risgray, font=\footnotesize] at (7.6,1.5) {$[\,g_d;\ h_d^{(s-1)}\,]$};

\draw[conn, risamber] (cprev) -- (xf);
\draw[conn] (xf) -- (add);
\draw[conn, risamber] (add) -- (cnow);
\draw[conn] (cnow) -- (tanhc);
\draw[conn] (tanhc) -- (xo);
\draw[conn, risgreen] (xo) -- (hnow);

\draw[conn, risgray] (gf) -- (xf);
\draw[conn, risgray] (gi) -- (xi);
\draw[conn, risgray] (gc) -- (xi);
\draw[conn, risgray] (xi) -- (add);
\draw[conn, risgray] (go) -- (xo);

\node[inode, draw=risorange, text=risorange] (dx) at (15.4,5.3) {$\Delta\xi_d$};
\draw[conn, risorange] (hnow) -- (dx);
\node[risgray, font=\footnotesize] at (14.2,5.62) {readout};

\draw[->, dashed, risamber] (cnow.north) -- (7.0,6.6) -- (0,6.6) -- (cprev.north);
\node[risamber, font=\footnotesize] at (3.5,6.8) {cell state $c_d$ carried};
\draw[->, dashed, risgreen] (hnow.south) -- (13.0,-0.6) -- (-0.35,-0.6) -- (-0.35,1.7) -- (hprev.west);
\node[risgreen, font=\footnotesize] at (6.3,-0.85) {hidden state $h_d$ carried over $S$ steps};

\end{tikzpicture}
\caption{Neural-network structure of the coordinate-wise LSTM learned optimizer $\mathcal{M}_\psi$. For each coordinate $d$, the preprocessed gradient $g_d$ and the recurrent hidden state $h_d^{(s-1)}$ are concatenated and mapped through four gates---forget $f$, input $i$, candidate $\tilde{c}$, and output $o$---via weights $\mathbf{W}_*\in\mathbb{R}^{H\times 1}$ (on $g_d$) and $\mathbf{U}_*\in\mathbb{R}^{H\times H}$ (on $h_d$). The gates update the cell state $c_d$ and the hidden state $h_d$, which are carried across the $S$ refinement steps (dashed loops); a readout layer then produces the adaptive per-coordinate update $\Delta\xi_d=\nu\,(\mathbf{a}^T h_d^s+b)$.}
\label{fig:arch}
\end{figure*}

\subsection{PSO-Based Global Warm Start}
In the first stage, PSO performs a global search over the STAR-RIS coefficients. Each particle encodes a candidate solution
\begin{equation}
\mathbf{x} = [\bm{\varphi}; \bm{\theta}_t; \bm{\theta}_r] \in \mathcal{S},
\label{eq:particle}
\end{equation}
where the search space is the box $\mathcal{S} = [0,\pi/2]^{N} \times [0,2\pi)^{N} \times [0,2\pi)^{N}$, so that the amplitude-split angles and the phase shifts respect~\eqref{eq:phi} and~\eqref{eq:p1_c3} by construction. The fitness of a particle is the WSR achieved by the closed-form ZF precoder~\eqref{eq:zf} evaluated at the composite channel $\mathbf{A}(\mathbf{x})$:
\begin{equation}
f(\mathbf{x}) = R\big(\mathbf{W}_{\mathrm{ZF}}(\mathbf{x}), \mathbf{x}\big) = \sum_{k=1}^{K} \omega_k \log_2\!\big(1 + \gamma_k(\mathbf{x})\big),
\label{eq:psofit}
\end{equation}
where $\gamma_k(\mathbf{x})$ is obtained by substituting $\mathbf{A}(\mathbf{x})$ and the normalized ZF precoder into~\eqref{eq:sinr}. This keeps the fitness consistent with the final objective~\eqref{eq:p3} while avoiding an inner precoder optimization.

Let $\mathbf{x}_i^{(t)}$ and $\mathbf{v}_i^{(t)}$ denote the position and velocity of the $i$-th particle at iteration $t$, and let $\mathbf{p}_i$ and $\mathbf{g}$ denote the personal best and global best positions, respectively. The velocity and position are updated according to
\begin{align}
\mathbf{v}_i^{(t+1)} &= \omega \mathbf{v}_i^{(t)} + c_1 r_1 (\mathbf{p}_i - \mathbf{x}_i^{(t)}) + c_2 r_2 (\mathbf{g} - \mathbf{x}_i^{(t)}), \label{eq:vup}\\
\mathbf{x}_i^{(t+1)} &= \Pi_{\mathcal{S}}\!\big(\mathbf{x}_i^{(t)} + \mathbf{v}_i^{(t+1)}\big),
\label{eq:xup}
\end{align}
where $\omega$ is the inertia weight, $c_1$ and $c_2$ are the acceleration coefficients, $r_1, r_2 \sim \mathcal{U}(0,1)$ are random numbers, and $\Pi_{\mathcal{S}}(\cdot)$ denotes the element-wise projection (clipping) onto the box $\mathcal{S}$. The personal and global best positions are updated as
\begin{align}
\mathbf{p}_i &\leftarrow \mathbf{x}_i^{(t+1)} \ \text{if}\ \ f\big(\mathbf{x}_i^{(t+1)}\big) > f(\mathbf{p}_i), \label{eq:pbest}\\
\mathbf{g} &\leftarrow \arg\max_{i} f(\mathbf{p}_i), \label{eq:gbest}
\end{align}
so that the global-best fitness is never degraded, i.e., $f(\mathbf{g}^{(t+1)}) \ge f(\mathbf{g}^{(t)})$ for all $t$. After $I_{\mathrm{PSO}}$ iterations, the global best $\mathbf{g}$ is taken as the warm-start point for the refinement stage. Since PSO explores the entire feasible region $\mathcal{S}$, the resulting warm start is largely insensitive to the random initialization, which addresses the initialization sensitivity of gradient-based methods.

\subsection{Gradient Meta-Learning Refinement}
Starting from the PSO warm start, the second stage refines the solution through a gradient meta-learning backbone, i.e., a learned optimizer that maps the gradient of the objective~\eqref{eq:wsr} to an adaptive update. At the warm start, $\mathbf{X}^{(0)} = (\mathbf{A}\mathbf{A}^H)^{-1}$ reproduces exactly the ZF precoder of Stage~1, so the two stages share the same objective at initialization; as $\mathbf{X}$ evolves, the collapsed precoder~\eqref{eq:collapse} becomes a strict generalization of the ZF solution. With the collapsed precoder~\eqref{eq:collapse} and the unconstrained reparameterization~\eqref{eq:sigmoid}, the optimization reduces to updating the vector
\begin{equation}
\bm{\xi} = [\tilde{\bm{\varphi}}; \bm{\theta}_t; \bm{\theta}_r; \mathrm{vec}(\mathbf{X})] \in \R^{D},
\label{eq:xi}
\end{equation}
jointly, with $D = 3N + 2K^2$, where the complex precoder $\mathbf{X}$ is represented by its real and imaginary parts. At each refinement step, the variables are reconstructed, the precoder is normalized to satisfy the power constraint~\eqref{eq:p1_c1} by the differentiable scaling $\mathbf{W} \leftarrow \mathbf{W}\sqrt{P_{\max}/\sum_k\|\mathbf{w}_k\|^2}$, the WSR~\eqref{eq:wsr} is computed, and the gradient $\nabla_{\bm{\xi}} R$ is obtained by automatic differentiation through this scaling.

Since each entry of the composite channel $A_{k,m} = \sum_{n} H_{k,n} v_n G_{n,m}$ is a smooth function of the coefficients, $R$ is differentiable in $\bm{\xi}$. Differentiating~\eqref{eq:wsr} and~\eqref{eq:sinr} yields the scalar derivative
\begin{equation}
\frac{\partial R}{\partial \gamma_k} = \frac{\omega_k}{\ln 2\,(1+\gamma_k)},
\label{eq:grad_gamma}
\end{equation}
and the chain rule
\begin{equation}
\nabla_{\bm{\xi}} R = \sum_{k=1}^{K} \frac{\partial R}{\partial \gamma_k} \nabla_{\bm{\xi}} \gamma_k
\label{eq:grad_chain}
\end{equation}
closes the gradient, where each $\gamma_k$ is a smooth ratio of quadratics in $\mathbf{A}$ and $\mathbf{W}$. The dependence of the composite channel on the coefficients is rank-one in each element, i.e.,
\begin{equation}
\frac{\partial A_{k,m}}{\partial \varphi_n} = H_{k,n} \frac{\partial v_n}{\partial \varphi_n} G_{n,m}, \quad
\frac{\partial A_{k,m}}{\partial \theta_n} = j\, H_{k,n} v_n G_{n,m},
\label{eq:grad_A}
\end{equation}
with $\partial v_{t,n}/\partial\varphi_n = -\sin\varphi_n\, e^{j\theta_{t,n}}$ and $\partial v_{r,n}/\partial\varphi_n = \cos\varphi_n\, e^{j\theta_{r,n}}$ for the transmission and reflection coefficients, respectively, and $\partial A_{k,m}/\partial\theta_n = j\,H_{k,n}v_{t,n}G_{n,m}$ for $k\le K_t$ or $j\,H_{k,n}v_{r,n}G_{n,m}$ for $k>K_t$, which renders the backward pass fully explicit.

Instead of a hand-designed optimizer, the variables are updated by a \emph{learned optimizer}:
\begin{equation}
\bm{\xi}^{(s+1)} = \bm{\xi}^{(s)} + \mathcal{M}_\psi\!\left(\nabla_{\bm{\xi}} R(\bm{\xi}^{(s)})\right),
\label{eq:l2o}
\end{equation}
where $\mathcal{M}_\psi(\cdot)$ is a coordinate-wise long short-term memory (LSTM) meta-optimizer parameterized by $\psi$~\cite{b20}, whose neural-network structure is illustrated in Fig.~\ref{fig:arch}. In a coordinate-wise LSTM, the same cell is applied independently to every scalar coordinate of the gradient, and the hidden state carried by each coordinate stores its own momentum-like memory, thereby playing the role of the per-coordinate adaptive statistics that make Adam robust. To account for the distinct scales of the STAR-RIS coefficients and the precoder, two separate LSTMs are employed, i.e., $\mathcal{M}_c$ for $[\tilde{\bm{\varphi}}; \bm{\theta}_t; \bm{\theta}_r]$ and $\mathcal{M}_x$ for $\mathrm{vec}(\mathbf{X})$, and the gradient is preprocessed by the symmetric logarithmic mapping
\begin{equation}
g_d = \mathrm{sgn}\!\left(\nabla_{\xi_d} R\right)\,\log\!\left(1 + \left|\nabla_{\xi_d} R\right|\right),
\label{eq:preproc}
\end{equation}
to handle its wide dynamic range. Concretely, for the $d$-th optimization coordinate with preprocessed gradient $g_d$, the meta-optimizer maintains hidden and cell state vectors $\mathbf{h}_d, \mathbf{c}_d \in \R^{H}$ (with hidden size $H$) and updates them according to
\begin{align}
\mathbf{f}_d &= \varsigma(\mathbf{W}_f g_d + \mathbf{U}_f \mathbf{h}_d + \mathbf{b}_f), \label{eq:lstm}\\
\mathbf{i}_d &= \varsigma(\mathbf{W}_i g_d + \mathbf{U}_i \mathbf{h}_d + \mathbf{b}_i), \nonumber\\
\tilde{\mathbf{c}}_d &= \tanh(\mathbf{W}_c g_d + \mathbf{U}_c \mathbf{h}_d + \mathbf{b}_c), \nonumber\\
\mathbf{o}_d &= \varsigma(\mathbf{W}_o g_d + \mathbf{U}_o \mathbf{h}_d + \mathbf{b}_o), \nonumber\\
\mathbf{c}_d &\leftarrow \mathbf{f}_d \odot \mathbf{c}_d + \mathbf{i}_d \odot \tilde{\mathbf{c}}_d, \nonumber\\
\mathbf{h}_d &\leftarrow \mathbf{o}_d \odot \tanh(\mathbf{c}_d), \nonumber
\end{align}
and outputs the coordinate update $\Delta\xi_d = \nu\,(\mathbf{a}^T \mathbf{h}_d + b)$ with output scale $\nu$, where $\mathbf{a}\in\mathbb{R}^{H}$ and $b$ are the readout weight and bias. The parameters $\{\mathbf{W}_{\ast}, \mathbf{U}_{\ast}, \mathbf{b}_{\ast}, \mathbf{a}, b\}$ are shared across all coordinates, so the per-coordinate memory resides solely in the recurrent state $(\mathbf{h}_d, \mathbf{c}_d)$, which mimics the momentum and running statistics of a hand-designed optimizer such as Adam~\cite{b20}.

The meta-optimizer is trained \emph{offline} over a set of channel realizations by unrolling the recursion~\eqref{eq:l2o} for $T$ steps and minimizing the negative average WSR:
\begin{equation}
\min_{\psi} \; \frac{1}{|\mathcal{C}|} \sum_{c \in \mathcal{C}} \left[ -\frac{1}{T}\sum_{s=1}^{T} R_c\!\left(\bm{\xi}^{(s)}\right) \right],
\label{eq:meta}
\end{equation}
where $\mathcal{C}$ is the training set of channel realizations and $\bm{\xi}^{(0)}$ is the PSO warm start of channel $c$. The average loss in~\eqref{eq:meta} encourages both fast convergence and a high final WSR. Notably, since the WSR landscape at the warm start is sharply peaked---the ZF precoder drives the inter-user interference nearly to zero, so any perturbation sharply degrades the SINR---a second-order backpropagation through~\eqref{eq:l2o} is numerically unstable. We therefore detach the gradient $\nabla_{\bm{\xi}} R$ from the computational graph during meta-training, which corresponds to a first-order (truncated-backpropagation) approximation in the spirit of first-order MAML (FOMAML)~\cite{b23} and renders the training stable while the LSTM still learns per-coordinate adaptive update rules. Fig.~\ref{fig:train} summarizes the complete offline meta-training procedure.
\begin{remark}
Different from the model-agnostic meta-learning (MAML) algorithm, which differentiates through the entire inner-loop optimization, the proposed PSA-GML detaches the gradient input during meta-training. This first-order approximation avoids the second-order Hessian-vector products that are numerically unstable on the sharply peaked WSR landscape, while retaining the per-coordinate adaptivity that is essential for convergence.
\end{remark}

\begin{figure*}[t]
\centering
\begin{tikzpicture}[
  tdata/.style={rectangle, rounded corners=2pt, draw=risblue, fill=risbluef, thick,
                minimum width=4.6cm, minimum height=0.95cm, align=center, inner sep=2pt, font=\footnotesize},
  tps/.style={rectangle, rounded corners=2pt, draw=risgreen, fill=risgreenf, thick,
              minimum width=4.6cm, minimum height=0.95cm, align=center, inner sep=2pt, font=\footnotesize},
  tcomp/.style={rectangle, rounded corners=2pt, draw=risgray, fill=risgrayf, thick,
                minimum width=2.8cm, minimum height=1.05cm, align=center, inner sep=2pt, font=\scriptsize},
  tlstm/.style={rectangle, rounded corners=3pt, draw=risorange, fill=risorangef, thick,
                minimum width=2.8cm, minimum height=1.05cm, align=center, inner sep=2pt, font=\scriptsize},
  tloss/.style={rectangle, rounded corners=2pt, draw=risgray, fill=white, thick,
                minimum width=5.6cm, minimum height=0.95cm, align=center, inner sep=2pt, font=\footnotesize},
  tupd/.style={rectangle, rounded corners=2pt, draw=risamber, fill=risamberf, thick,
               minimum width=6.0cm, minimum height=1.0cm, align=center, inner sep=2pt, font=\footnotesize},
  tcirc/.style={circle, draw, thick, fill=white, inner sep=1.3pt, font=\footnotesize},
  tarr/.style={->, thick},
]

\node[risamber, font=\small] at (7.5,10.0) {Offline meta-training: gradient meta-learning ($E$ epochs)};

\draw[risamber, dashed, thick, rounded corners=5pt] (0.4,0.3) rectangle (14.8,9.4);

\node[tdata] (ch) at (7.5,8.3) {training channel set\\ $\{c\in\mathcal{C}\}$ (minibatch of $B$)};
\node[tps] (pso) at (7.5,6.9) {PSO warm start $\rightarrow \bm{\xi}_c^{(0)}$\\ (Stage 1)};
\draw[tarr] (ch.south) -- (pso.north);

\draw[risorange, dashed, thick, rounded corners=4pt] (1.2,4.3) rectangle (14.4,6.3);
\node[risorange, font=\footnotesize, anchor=west] at (1.4,6.12) {coordinate-wise LSTM $\mathcal{M}_{\psi}$ --- the learned optimizer (parameters $\psi$)};

\node[tcirc] (x0) at (1.8,5.45) {$\bm{\xi}^{(s)}$};
\node[tcomp] (fwd) at (4.9,5.45) {forward + gradient\\ $R_c(\bm{\xi}^{(s)}),\ \nabla_{\bm{\xi}}R$};
\node[tcomp] (pre) at (8.0,5.45) {preprocess\\ $g=\mathrm{sgn}(\nabla R)\log(1{+}|\nabla R|)$};
\node[tlstm] (lstm) at (11.1,5.45) {LSTM update\\ $\bm{\xi}^{(s+1)}=\bm{\xi}^{(s)}+\Delta\bm{\xi}$};
\node[tcirc] (x1) at (13.3,5.45) {$\bm{\xi}^{(s+1)}$};

\draw[tarr] (x0) -- (fwd);
\draw[tarr] (fwd) -- (pre);
\draw[tarr] (pre) -- (lstm);
\draw[tarr] (lstm) -- (x1);

\draw[->, dashed, risgray] (x1.south) -- (13.3,4.65) -- (1.8,4.65) -- (x0.south);
\node[risgray, font=\scriptsize] at (7.55,4.62) {unroll $T$ steps};

\node[tloss] (loss) at (7.5,3.0) {$L=-\frac{1}{T}\sum_{s=1}^{T}R_c(\bm{\xi}^{(s)})$, averaged over the minibatch};
\draw[tarr] (7.5,4.3) -- (loss.north);
\node[risgray, font=\scriptsize, anchor=west] at (7.9,3.85) {trajectory WSR};

\node[tupd] (upd) at (7.5,1.7) {first-order update: detach $\nabla_{\bm{\xi}}R$;\ \ $\psi\leftarrow\mathrm{Adam}(\nabla_{\psi}L)$};
\draw[tarr] (loss.south) -- (upd.north);

\draw[->, dashed, risamber] (upd.west) -- (0.9,1.7) -- (0.9,8.3) -- (ch.west);
\node[risamber, font=\scriptsize] at (2.85,1.35) {repeat $E$ epochs};

\node[tps] (out) at (7.5,-0.3) {trained optimizer $\mathcal{M}_{\psi^{*}}$\\ (deploy: $S$-step refinement on new channels)};
\draw[->, risgreen] (upd.south) -- (out.north);
\node[risgreen, font=\scriptsize, anchor=west] at (7.9,0.68) {after $E$ epochs};

\end{tikzpicture}
\caption{Offline meta-training of the coordinate-wise LSTM learned optimizer $\mathcal{M}_\psi$. \emph{Inner loop:} for each training channel, the PSO warm start $\bm{\xi}_c^{(0)}$ is unrolled for $T$ steps through the LSTM, producing a WSR trajectory. \emph{Outer loop:} the meta-loss $-\frac{1}{T}\sum_s R_c(\bm{\xi}^{(s)})$ is averaged over a minibatch and minimized by a first-order (FOMAML) update of the LSTM parameters $\psi$, repeated for $E$ epochs; the trained optimizer $\mathcal{M}_{\psi^{*}}$ is then deployed for $S$-step refinement on new channels.}
\label{fig:train}
\end{figure*}

\subsection{Proposed PSA-GML Algorithm}
The overall procedure of the proposed PSA-GML algorithm is summarized in Algorithm~1.

\begin{algorithm}[tp]
\caption{Proposed PSA-GML Algorithm}
\label{alg:psagml}
\begin{algorithmic}[1]
\Procedure{PSA-GML}{$\mathbf{G}, \mathbf{H}_t, \mathbf{H}_r, \{\omega_k\}, \sigma^2, P_{\max}$}
    \State Initialize $P$ particles $\{\mathbf{x}_i^{(0)}\}$ and velocities $\{\mathbf{v}_i^{(0)}\}$ randomly in $\mathcal{S}$
    \State Initialize personal best $\mathbf{p}_i \gets \mathbf{x}_i^{(0)}$ and global best $\mathbf{g}$
    \For{$t \gets 1$ \textbf{to} $I_{\mathrm{PSO}}$}
        \For{each particle $i$}
            \State Construct $\mathbf{A}(\mathbf{x}_i)$ via \eqref{eq:A} and $\mathbf{W}_{\mathrm{ZF}}$ via \eqref{eq:zf}
            \State Compute fitness $f(\mathbf{x}_i)$ via \eqref{eq:psofit}
            \State Update $\mathbf{p}_i$ and $\mathbf{g}$ via \eqref{eq:pbest}--\eqref{eq:gbest}
        \EndFor
        \For{each particle $i$}
            \State Update $\mathbf{v}_i$ and $\mathbf{x}_i$ via \eqref{eq:vup}--\eqref{eq:xup}
        \EndFor
    \EndFor
    \State Initialize $\bm{\varphi}^{(0)} \gets \mathbf{g}$; map to unconstrained $\tilde{\bm{\varphi}}$ via \eqref{eq:sigmoid}; set $\mathbf{X}^{(0)} \gets (\mathbf{A}\mathbf{A}^H)^{-1}$
    \State Initialize $R_{\mathrm{best}} \gets -\infty$
    \For{$s \gets 1$ \textbf{to} $S$}
        \State Reconstruct $\bm{\varphi} = \frac{\pi}{2}\varsigma(\tilde{\bm{\varphi}})$; form $\mathbf{A}$ and $\mathbf{W} = \mathbf{A}^H\mathbf{X}$; normalize $\mathbf{W}$ to satisfy \eqref{eq:p1_c1}
        \State Compute $R$ via \eqref{eq:wsr} and $\nabla_{\bm{\xi}} R$ via \eqref{eq:grad_gamma}--\eqref{eq:grad_A}
        \State \textbf{if} $R > R_{\mathrm{best}}$ \textbf{then} $R_{\mathrm{best}} \gets R$, $\bm{\xi}_{\mathrm{best}} \gets \bm{\xi}$
        \State Update $\bm{\xi} \gets \bm{\xi} + \mathcal{M}_{\psi}\!\left(\nabla_{\bm{\xi}} R\right)$ via \eqref{eq:l2o}
    \EndFor
    \State \Return $\mathbf{W}^{\ast}$, $(\bm{\varphi}^{\ast}, \bm{\theta}_t^{\ast}, \bm{\theta}_r^{\ast})$ reconstructed from $\bm{\xi}_{\mathrm{best}}$
\EndProcedure
\end{algorithmic}
\end{algorithm}

\subsection{Convergence and Complexity Analysis}
We first establish a monotone non-decreasing property of the proposed algorithm, which is a weaker guarantee than convergence to a stationary point but ensures that the returned solution is no worse than the warm start.
\begin{proposition}\label{prop:mono}
The proposed PSA-GML algorithm produces a WSR no smaller than the WSR of its PSO warm start.
\end{proposition}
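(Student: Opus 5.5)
The argument rests on the best-iterate bookkeeping in Algorithm~\ref{alg:psagml}, not on any property of the learned update $\mathcal{M}_\psi$. The LSTM step \eqref{eq:l2o} carries no ascent guarantee, so I will not try to show that individual steps increase $R$. Instead I will show two things: the first iterate evaluated in Stage~2 reproduces exactly the WSR of the PSO warm start, and the returned point is the maximizer over all evaluated iterates.

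\emph{Step 1 (matching objectives at initialization).} Let $\mathbf{g}$ be the PSO global best, with fitness $f(\mathbf{g})$ given by \eqref{eq:psofit}. In Stage~2 the amplitude angles are mapped to $\tilde{\bm{\varphi}}$ through the inverse of \eqref{eq:sigmoid}, while the phases are carried over unchanged. Reconstructing $\bm{\varphi}=\frac{\pi}{2}\varsigma(\tilde{\bm{\varphi}})$ then returns the same amplitudes, so the composite channel $\mathbf{A}$ in \eqref{eq:A} is identical. With $\mathbf{X}^{(0)}=(\mathbf{A}\mathbf{A}^H)^{-1}$, the collapsed precoder \eqref{eq:collapse} equals $\mathbf{W}_{\mathrm{ZF}}$ from \eqref{eq:zf}. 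Both stages apply the same Frobenius power normalization, so the precoders coincide and the WSR at $s=1$ equals $f(\mathbf{g})$.

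The one subtlety is the boundary case where PSO clipping puts $\varphi_n\in\{0,\pi/2\}$; there the sigmoid inverse is undefined. I would handle it in one of two ways. The first is to note that the implementation clips $\varphi_n$ to $[\epsilon,\pi/2-\epsilon]$ before inversion, and to interpret the warm-start WSR as the WSR of that clipped point. The second is to use continuity of $R$ in $\bm{\varphi}$ through \eqref{eq:A}, so the statement holds up to an arbitrarily small tolerance. I expect this step to be the main obstacle, since it is the only place where exact equality could fail.

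\emph{Step 2 (best-iterate tracking).} Initially $R_{\mathrm{best}}=-\infty$. At each step $s$, the current $R$ is compared with $R_{\mathrm{best}}$ before the update is applied, so by induction $R_{\mathrm{best}}=\max_{1\le s'\le s}R(\bm{\xi}^{(s')})$ and this quantity is non-decreasing in $s$. The iterate at $s=1$ is the warm start, so after $S\ge1$ steps $R_{\mathrm{best}}\ge R(\bm{\xi}^{(1)})=f(\mathbf{g})$.

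\emph{Step 3 (feasibility of the returned point).} The algorithm returns the variables reconstructed from $\bm{\xi}_{\mathrm{best}}$, and these achieve $R_{\mathrm{best}}$. The point is feasible: the sigmoid keeps $\varphi_n\in[0,\pi/2]$ (Proposition~\ref{prop:sigmoid}), the phases are valid modulo $2\pi$, and the normalization enforces \eqref{eq:p1_c1}. Its WSR is therefore at least $f(\mathbf{g})$.
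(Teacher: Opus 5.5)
Your proof is correct and follows the paper's route: the Stage-2 starting point reproduces the PSO fitness because $\mathbf{X}^{(0)}=(\mathbf{A}\mathbf{A}^H)^{-1}$ with the common Frobenius normalization gives the same ZF precoder, and best-iterate retention then gives $R_{\mathrm{best}}\ge f(\mathbf{g})$. The paper also records the Stage-1 monotonicity $f(\mathbf{g}^{(t+1)})\ge f(\mathbf{g}^{(t)})$, which you rightly leave out because the claim does not need it.

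Your boundary remark is a genuine refinement rather than a defect. The paper asserts $R(\bm{\xi}^{(0)})=f(\mathbf{g}^{(I_{\mathrm{PSO}})})$ without noting that the clipping $\Pi_{\mathcal{S}}$ can place $\varphi_n\in\{0,\pi/2\}$, where the sigmoid inverse does not exist. Its proof therefore tacitly assumes an interior warm start, which is precisely the case your Step~1 handles exactly.
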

\begin{IEEEproof}
In Stage~1, the global best is updated only when a strictly larger fitness is found, i.e., $f(\mathbf{g}^{(t+1)}) = \max\big\{f(\mathbf{g}^{(t)}), \max_i f(\mathbf{x}_i^{(t+1)})\big\} \ge f(\mathbf{g}^{(t)})$. In Stage~2, the refinement is initialized at the warm start---the coefficients are taken from $\mathbf{g}$ and the collapsed precoder is the ZF solution $\mathbf{X}^{(0)}=(\mathbf{A}\mathbf{A}^H)^{-1}$, so that $R(\bm{\xi}^{(0)}) = f(\mathbf{g}^{(I_{\mathrm{PSO}})})$---and the best iterate is retained along the trajectory, giving $\max_s R(\bm{\xi}^{(s)}) \ge R(\bm{\xi}^{(0)})$. Combining the two stages completes the proof. \hfill $\blacksquare$
\end{IEEEproof}

We then analyze the per-evaluation computational complexity of the proposed algorithm and the AO baseline. For the PSO stage, each particle evaluation involves constructing the composite channel $\mathbf{A}$ in~\eqref{eq:A}, computing the ZF precoder in~\eqref{eq:zf}, and evaluating the WSR in~\eqref{eq:wsr}, which costs $\mathcal{O}(NN_t K + K^3)$ dominated by the matrix inversion. With $P$ particles and $I_{\mathrm{PSO}}$ iterations, the total complexity of Stage~1 is $\mathcal{O}(P I_{\mathrm{PSO}} (NN_t K + K^3))$. For Stage~2, each unrolled step involves the forward WSR evaluation $\mathcal{O}(NN_t K + K^3)$ together with the coordinate-wise LSTM update $\mathcal{O}(D H^2)$ with hidden size $H$, giving $\mathcal{O}(S(NN_t K + K^3 + D H^2))$ in total. We note that the LSTM term is in fact the dominant per-step cost---with $N=32$, $N_t=32$, $K=4$, and $H=32$, one has $D=3N+2K^2=128$ and $D H^2\approx1.3\times10^5$, whereas $NN_t K + K^3\approx4.1\times10^3$. In contrast, each iteration of the AO baseline alternately updates the STAR-RIS coefficients by a gradient-ascent step and the precoder by the regularized ZF solution~\cite{b9}, both of which share the same order $\mathcal{O}(NN_t K + K^3)$ dominated by the $K\times K$ matrix inversion. Hence, the per-iteration complexity of the proposed refinement and that of the AO baseline are of the same order, and the practical computational advantage of PSA-GML stems from the parallelism of the $P$ particles in Stage~1 and from its initialization robustness, which removes the need for the multiple random restarts that AO would otherwise require, rather than from a lower asymptotic per-iteration cost. Summarizing, the overall complexity of the proposed algorithm is
\begin{align}
\mathcal{C}_{\mathrm{PSA\text{-}GML}} = \mathcal{O}\!\big(&P I_{\mathrm{PSO}}(NN_t K + K^3) \nonumber\\
&+ S(NN_t K + K^3 + D H^2)\big),
\label{eq:complexity}
\end{align}
where $D = 3N + 2K^2$ and $H$ is the LSTM hidden size.

\section{Simulation Results}
\vspace{1.0em}

In this section, we evaluate the performance of the proposed PSA-GML algorithm in STAR-RIS aided multi-user downlink systems through simulation experiments.

\subsection{Parameter Setting and Baseline}
Unless otherwise stated, the BS is equipped with $N_t = 32$ antennas and the STAR-RIS has $N = 32$ elements serving $K_t = 2$ transmission users and $K_r = 2$ reflection users. The noise power is set to $\sigma^2 = 1$, and the transmit SNR is $10$ dB. Both i.i.d. Rayleigh channels and Saleh-Valenzuela (SV) channels with $N_c = 3$ clusters and $N_{\mathrm{ray}} = 8$ rays are considered. The PSO parameters are set as $P = 20$, $I_{\mathrm{PSO}} = 50$, $\omega = 0.7$, and $c_1 = c_2 = 1.5$, and the refinement stage uses $S = 300$ steps of the learned optimizer with hidden size $H = 32$ and output scale $\nu = 0.05$, trained offline over $120$ channel realizations via first-order gradient meta-learning before deployment. All results are averaged over independent channel realizations, with $10$ realizations for the SNR and user sweeps, $8$ for the element sweep, $16$ for the interference-limited study, and $20$ for the convergence behavior.

We compare the proposed PSA-GML algorithm with the following baselines:
\begin{itemize}
\item \textbf{Random phase + ZF:} The STAR-RIS coefficients are randomly generated, and the precoder is the closed-form ZF solution~\eqref{eq:zf}.
\item \textbf{AO/BCD:} An alternating optimization (block coordinate descent) baseline that alternately updates the STAR-RIS coefficients by gradient ascent and the precoder by the regularized ZF solution until convergence~\cite{b9}. It is initialized from random phases and iterated until the WSR increment falls below a small tolerance or a maximum number of iterations $N_{\mathrm{AO}}$ is reached. Because the AO outcome is sensitive to its starting point, in addition to the single-run AO depicted in the figures, we also report the best result over $20$ random restarts, whose total online budget is no smaller than that of the proposed algorithm, to ensure a fair comparison.
\item \textbf{PSO only:} The PSO stage of the proposed algorithm without the refinement stage.
\item \textbf{Refine only:} The gradient refinement stage initialized from random coefficients rather than the PSO warm start.
\end{itemize}

\subsection{Convergence Behavior}
Fig.~\ref{fig:conv} shows the convergence behavior of the proposed PSA-GML algorithm and the baseline methods. It is observed that the proposed algorithm converges rapidly to a high WSR within a small number of refinement steps and starts from a higher initial WSR than the refine-only baseline. In contrast, the refine-only baseline, which is initialized from random coefficients, converges to a substantially lower WSR.

\begin{figure}[t]
\centering
\includegraphics[width=0.5\textwidth]{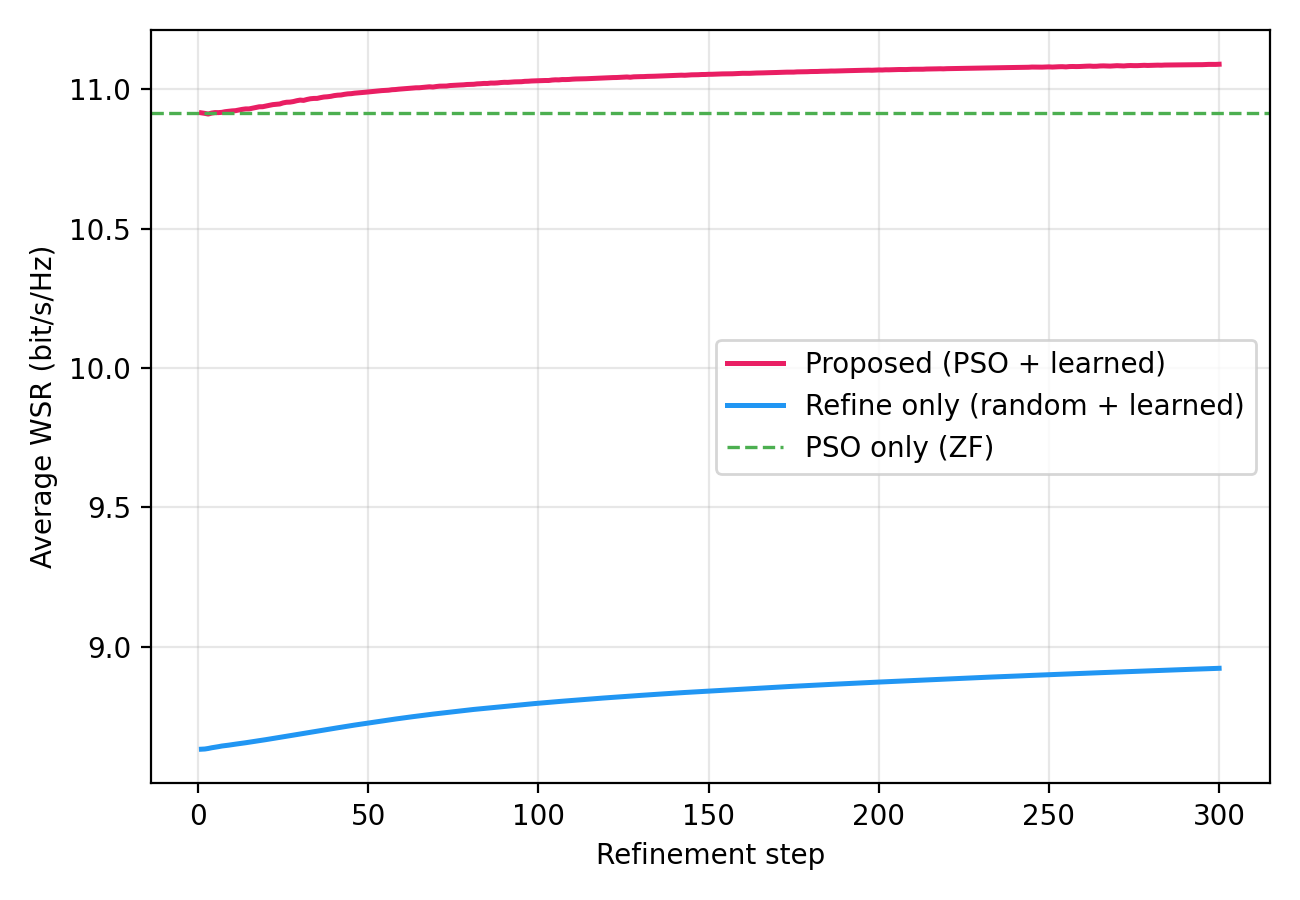}
\caption{Convergence behavior of the proposed PSA-GML algorithm and the baselines.}
\label{fig:conv}
\end{figure}

\subsection{Performance versus SNR}
Fig.~\ref{fig:snr} illustrates the WSR of all methods versus the transmit SNR. As the SNR increases, the WSR of all methods increases. In the low-SNR regime, the performance gap among the algorithms is relatively small, while in the high-SNR regime, the proposed PSA-GML algorithm achieves a more pronounced gain over the baselines. At SNR $=10$~dB, the proposed algorithm attains $11.06$~bits/s/Hz versus $9.78$~bits/s/Hz for a single AO run and $10.42$~bits/s/Hz for the AO baseline with the best of $20$ random restarts, i.e., a $13.1\%$ and a $6.2\%$ advantage, respectively, which confirms that the gain over AO is not an artifact of an unfavorable AO initialization.

\begin{figure}[t]
\centering
\includegraphics[width=0.5\textwidth]{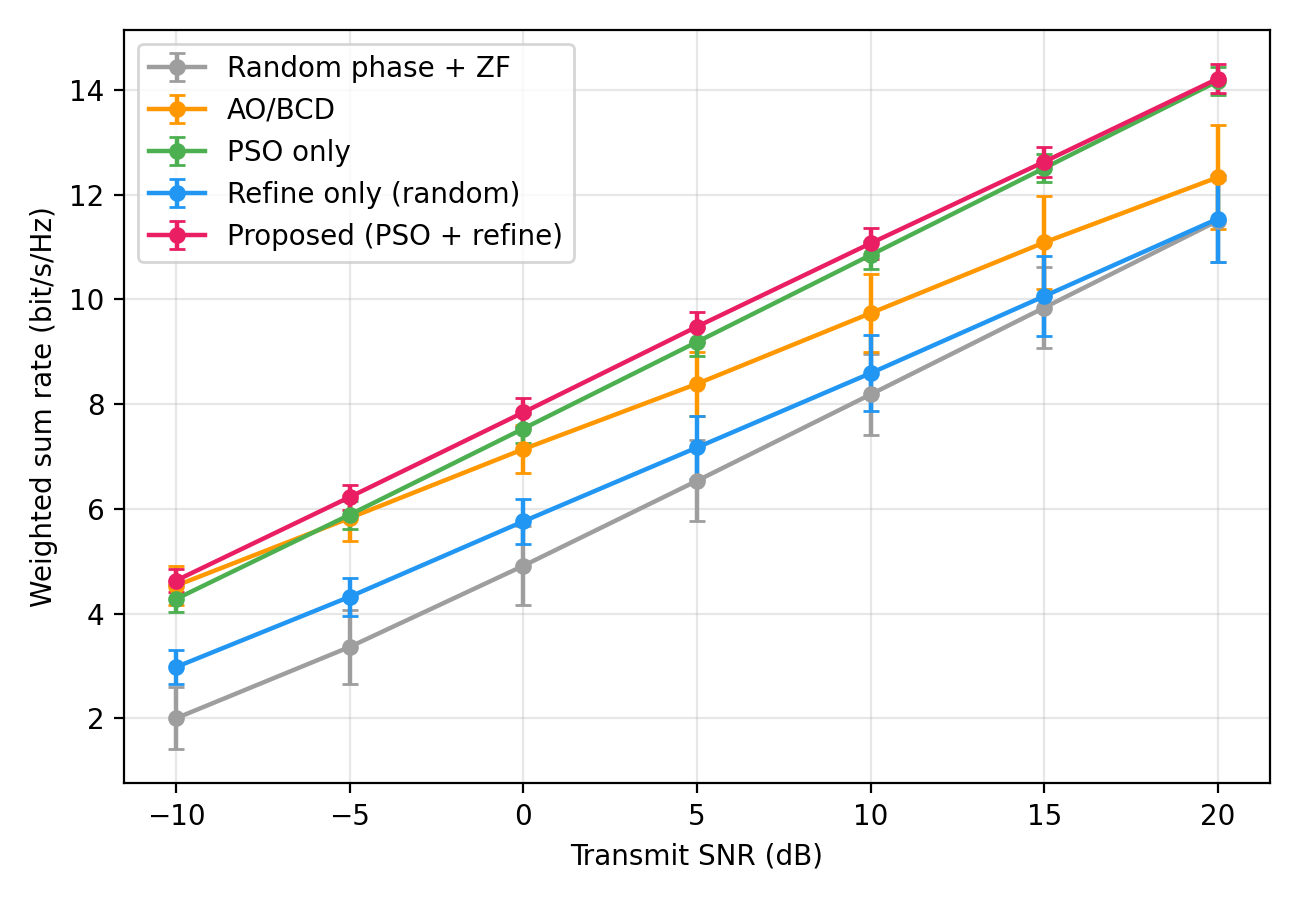}
\caption{WSR performance comparison versus the transmit SNR.}
\label{fig:snr}
\end{figure}

\subsection{Learned Optimizer in the Interference-Limited Regime}
To isolate the contribution of the meta-learning component, we evaluate Stage~2 in an interference-limited scenario with a low transmit SNR of $0$ dB and $K_t = K_r = 4$ (i.e., $K = 8$) users. The meta-optimizer is trained offline over $120$ channel realizations with a $T = 8$-step unroll and hidden size $H = 32$, and is evaluated on held-out channels with $300$ refinement steps, the same budget as the Adam reference. A short unroll ($T = 8$) is used during training to keep the computational graph shallow and the backpropagation numerically stable, whereas the deployed optimizer is unrolled for $S = 300$ steps; this is valid because the coordinate-wise LSTM learns a per-step update rule that is applied identically at every step and can therefore be safely unrolled beyond the training horizon.

Table~\ref{tab:l2o} compares (i) the PSO warm start alone, and PSO followed by (ii) a plain SGD refinement, (iii) the learned optimizer, and (iv) the hand-designed Adam refinement, all with the same $300$-step budget, and Fig.~\ref{fig:l2o} depicts the corresponding average WSR trajectories. Three observations are in order. First, under Adam the refinement stage improves the WSR by about $31\%$ relative to the PSO-only baseline. Second, the WSR of the SGD refinement ($4.32$ bits/s/Hz) is nearly identical to that of the PSO-only baseline ($4.31$ bits/s/Hz). Third, the learned optimizer, which is learned entirely from data, reaches $83.9\%$ of the Adam reference. This is notable because the hand-designed Adam here serves as a performance ceiling whose per-coordinate adaptive statistics are manually tuned, whereas the learned optimizer approaches this ceiling without any manual hyper-parameter tuning and transfers directly to unseen channels.

Table~\ref{tab:l2o} also reports the same ablation in the main regime (SNR $=10$~dB, $K=4$), and the two central observations carry over unchanged. First, plain SGD again fails to improve over the PSO warm start ($10.85$~bits/s/Hz for both), confirming that per-coordinate adaptivity---rather than the mere availability of gradient information---is the essential ingredient on the sharply peaked WSR landscape. Second, the learned optimizer attains $11.06$~bits/s/Hz, i.e., $99.4\%$ of the $11.12$~bits/s/Hz Adam reference, essentially closing the gap to the hand-designed ceiling. The refinement gain is smaller in the main regime ($+2.0\%$ over PSO) than in the interference-limited regime ($+31\%$ over PSO), which is a direct consequence of the higher quality of the PSO warm start at a moderate load and SNR: when the warm start already lies close to a locally optimal solution, the headroom left for any refinement optimizer is inherently small.

Beyond generalization to unseen channel realizations within the training distribution, the learned update rule exhibits a zero-shot transfer property across operating regimes. Applying the meta-optimizer trained exclusively in the main regime (SNR $=10$~dB, $K=4$) zero-shot to this interference-limited setting---without any fine-tuning---yields an average WSR of $4.85$~bits/s/Hz, i.e., approximately $86\%$ of the Adam reference and on par with the dedicated model in Table~\ref{tab:l2o}. This contrasts sharply with a hand-designed optimizer such as Adam, whose step size and momentum statistics must be re-tuned per scenario. The observed zero-shot transfer therefore indicates that the LSTM learns a per-coordinate update rule intrinsic to the WSR landscape rather than to any particular operating point.

\begin{table}[tp]
\centering
\caption{WSR (bits/s/Hz) of the PSO warm start followed by different refinement optimizers, in the main regime (SNR $=10$ dB, $K=4$) and the interference-limited regime (SNR $=0$ dB, $K=8$)}
\label{tab:l2o}
\begin{tabular}{|l|c|c|}
\hline
\textbf{Method} & \textbf{Main} & \textbf{Intf} \\ \hline
PSO only & 10.85 & 4.31 \\ \hline
PSO + SGD refinement & 10.85 & 4.32 \\ \hline
PSO + learned optimizer & 11.06 & 4.73 \\ \hline
PSO + Adam refinement & 11.12 & 5.64 \\ \hline
\end{tabular}
\end{table}

\begin{figure}[t]
\centering
\includegraphics[width=0.5\textwidth]{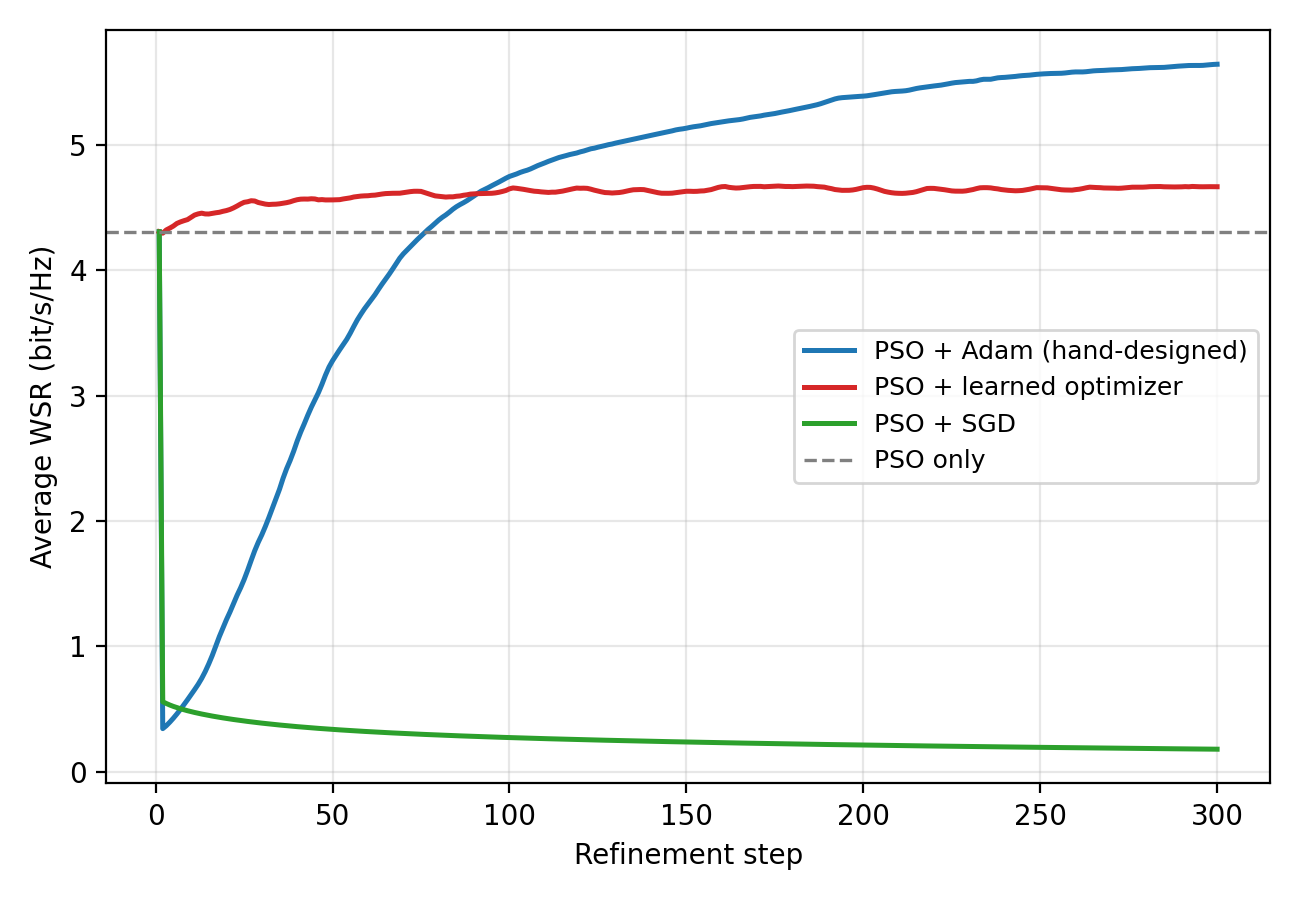}
\caption{Average WSR convergence of the PSO warm start followed by different refinement optimizers (SGD, the learned optimizer, and Adam) in the interference-limited regime.}
\label{fig:l2o}
\end{figure}

\subsection{Impact of the Number of Elements and Users}
Figs.~\ref{fig:N} and~\ref{fig:K} evaluate the WSR versus the number of STAR-RIS elements $N$ and the number of users $K$, respectively. It is observed that the WSR of all methods grows with $N$, and the proposed algorithm attains the highest WSR across the entire range of $N$, with its relative advantage over the baselines most pronounced at small-to-moderate $N$. Similarly, the proposed algorithm attains the highest sum rate for all considered values of $K$.

\begin{figure}[t]
\centering
\includegraphics[width=0.5\textwidth]{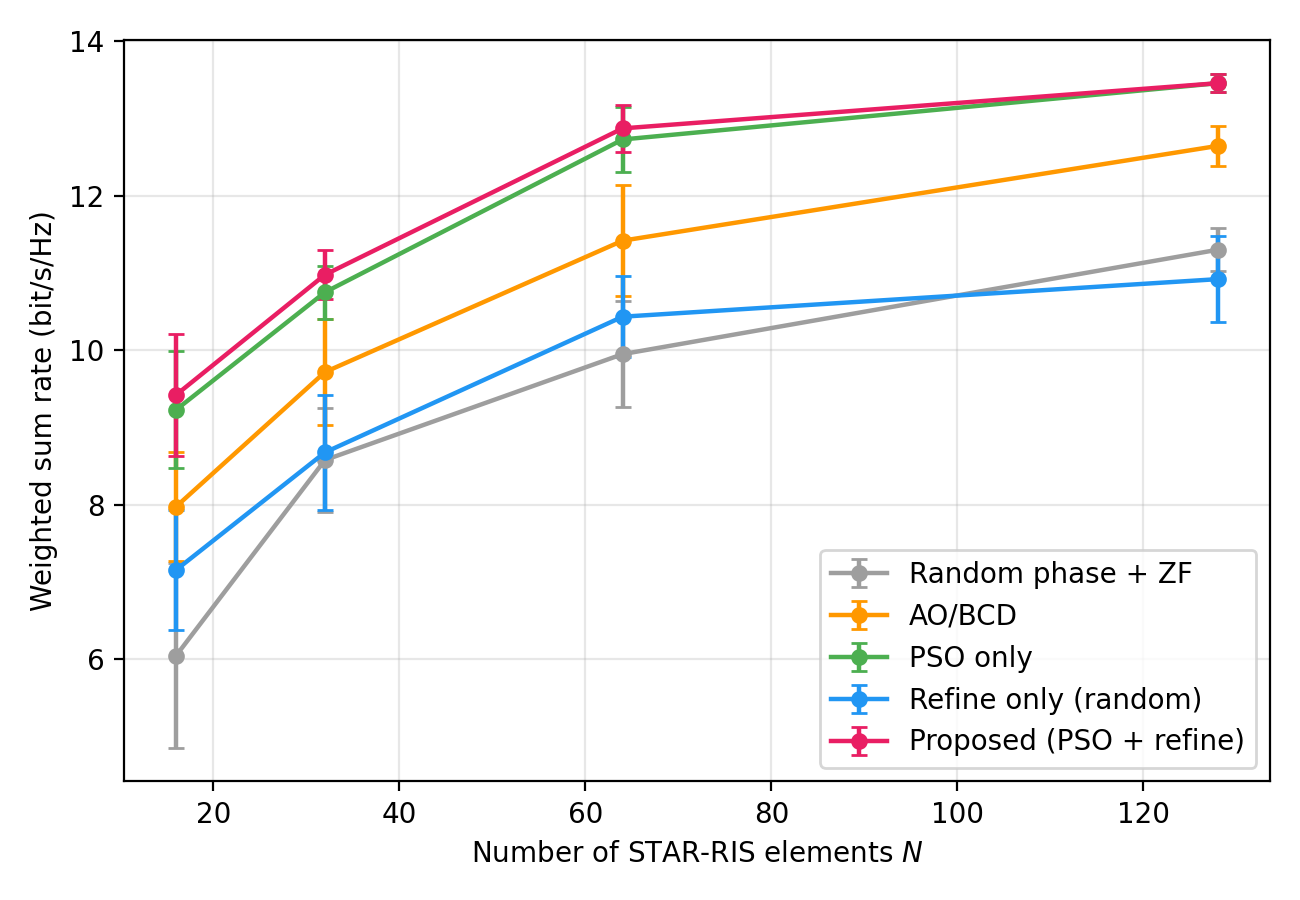}
\caption{WSR performance comparison versus the number of STAR-RIS elements $N$.}
\label{fig:N}
\end{figure}

\begin{figure}[t]
\centering
\includegraphics[width=0.5\textwidth]{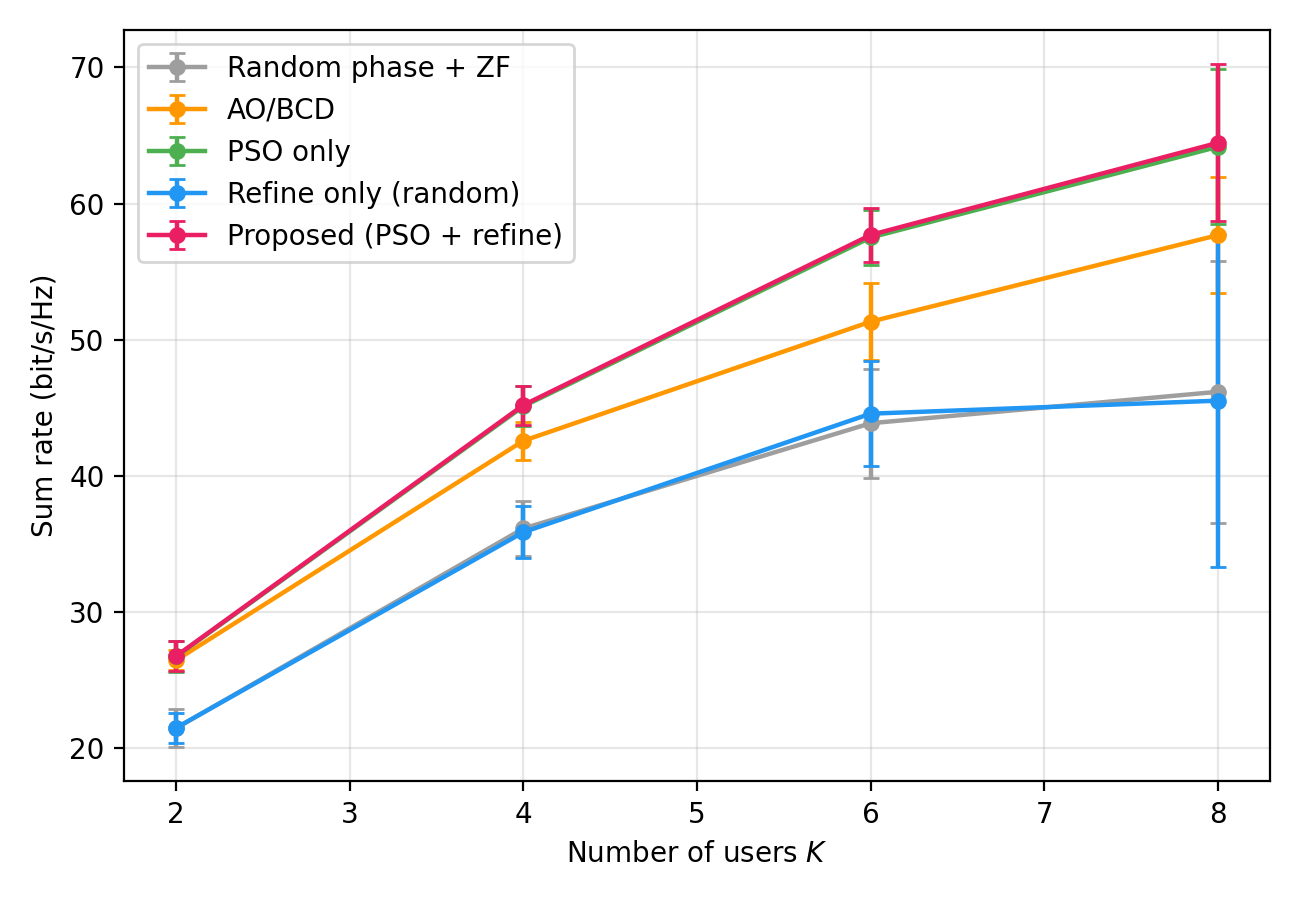}
\caption{Sum-rate performance comparison versus the number of users $K$.}
\label{fig:K}
\end{figure}

\subsection{Warm-Start Gain Analysis}
Fig.~\ref{fig:gain} investigates the gain of the PSO warm start, defined as the relative WSR improvement of the proposed algorithm over the refine-only baseline, versus $N$ under both i.i.d. Rayleigh and SV channels. Two observations are noteworthy. First, the warm-start gain is substantial under the correlated SV channel, reaching $20\%$ or more for all considered values of $N$ and peaking at $37.4\%$ at $N=16$. Second, the gain is much smaller under the i.i.d. Gaussian channel (around $10\%$ for $N\le 64$, declining to $6.9\%$ at $N=128$), and it decreases monotonically with $N$ under the correlated SV channel (from $37.4\%$ to $20.0\%$).

\begin{figure}[t]
\centering
\includegraphics[width=0.5\textwidth]{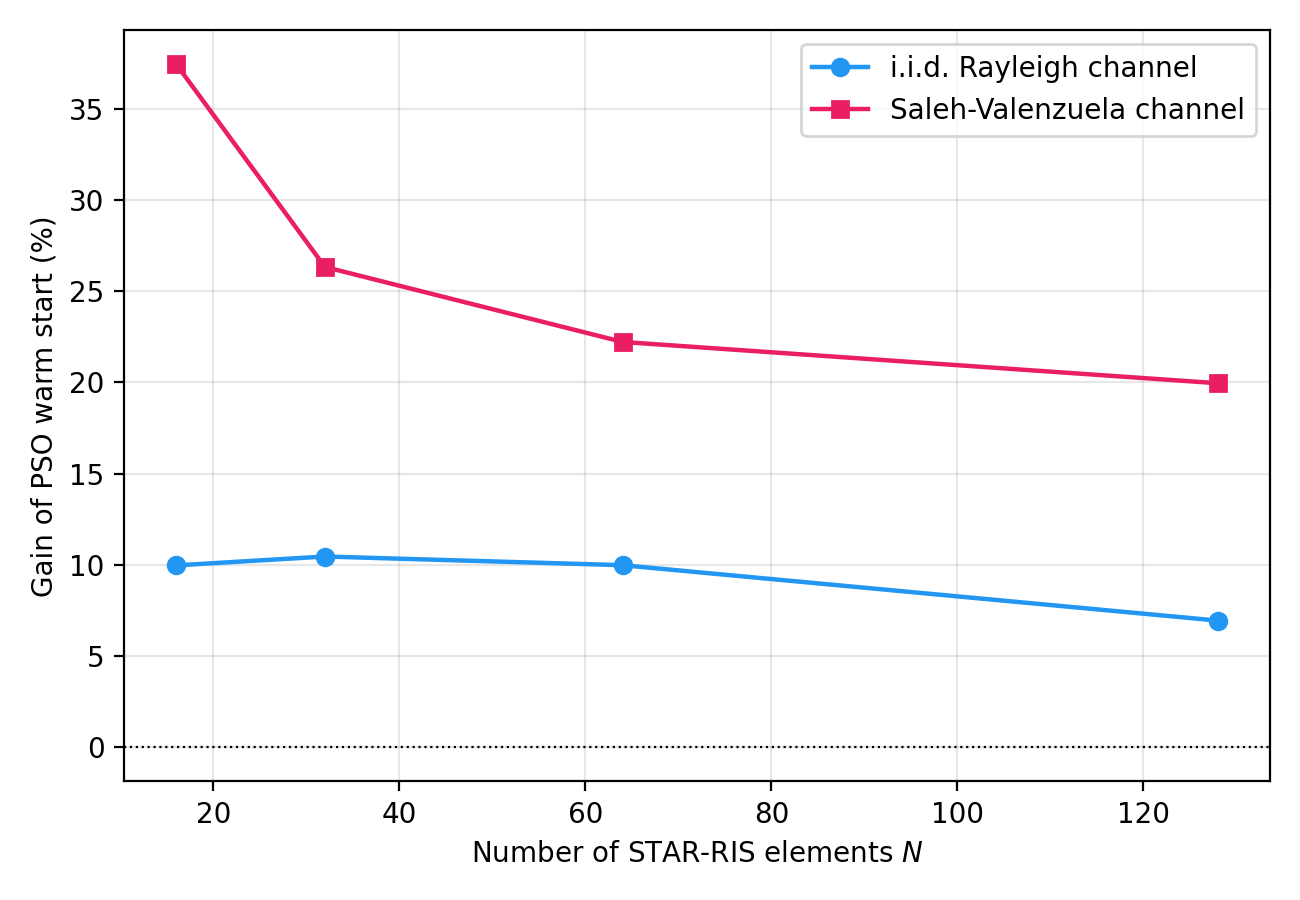}
\caption{Gain of the PSO warm start versus $N$ under i.i.d. Rayleigh and Saleh-Valenzuela channels.}
\label{fig:gain}
\end{figure}

\subsection{Runtime and Complexity}
Fig.~\ref{fig:time} compares the average runtime of the proposed algorithm and the baselines. The runtime of the proposed PSA-GML algorithm is dominated by the $S=300$ refinement steps of the learned optimizer (about $0.9$~s per realization at $N=32$, $K=4$) and exceeds that of the AO baseline (about $0.4$~s) in the current Python/Torch prototype, whose unrolled LSTM update carries a non-negligible per-step overhead. Nevertheless, as shown in Table~\ref{tab:complex}, the per-iteration complexity of the proposed algorithm and that of the AO baseline are of the same order $\mathcal{O}(NN_t K + K^3)$, and the larger observed runtime is a constant-factor effect of the unrolled LSTM in the current Python/Torch prototype: the $S$ refinement steps each carry a per-step overhead $\mathcal{O}(D H^2)$ that is not yet optimized in the prototype, and this overhead is amenable to batching and hardware acceleration. The offline training of the meta-optimizer over $N_c$ channel realizations adds a one-time cost, while the online inference only involves the PSO warm start and $S$ refinement steps.

\begin{figure}[t]
\centering
\includegraphics[width=0.5\textwidth]{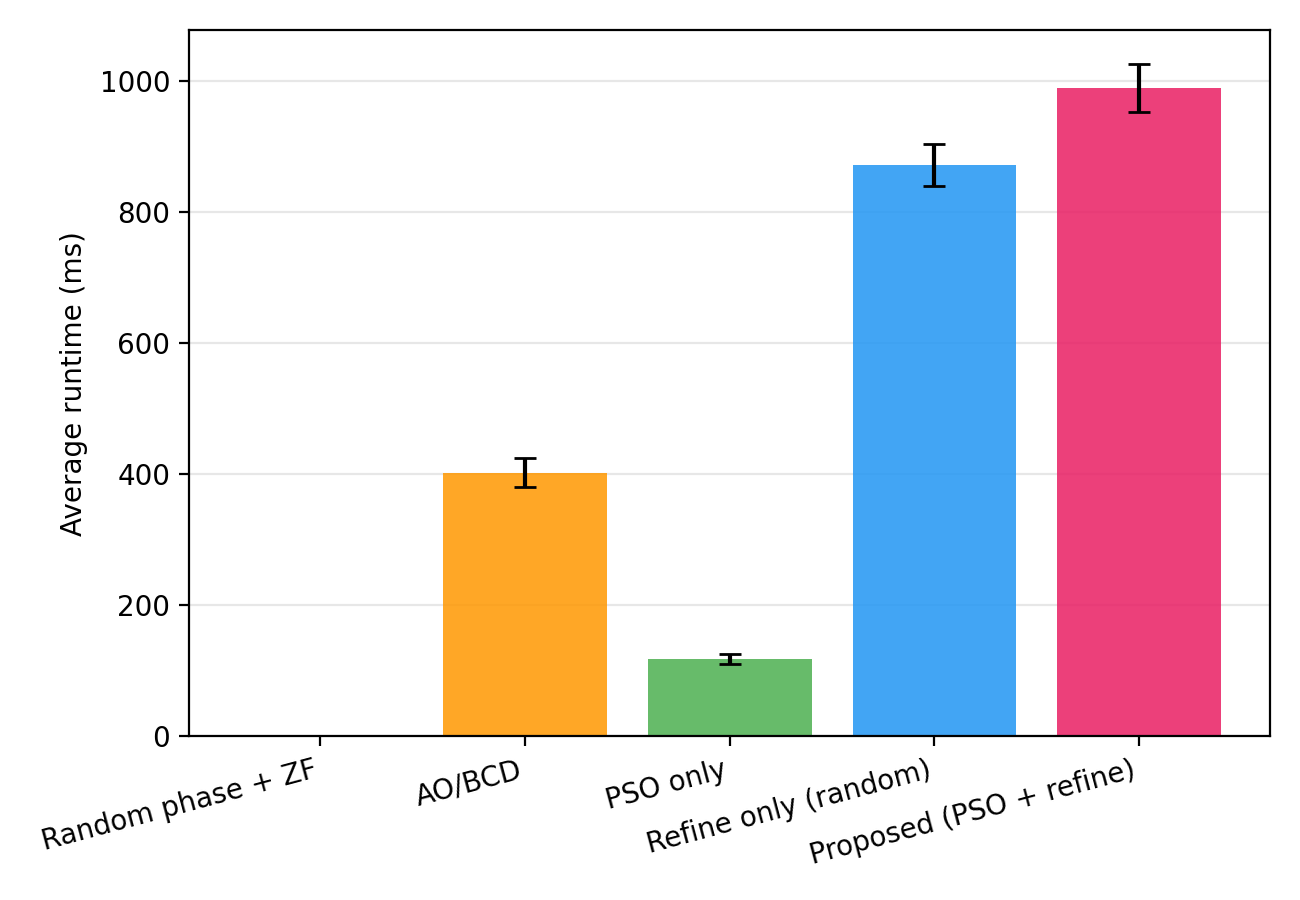}
\caption{Average runtime comparison of the proposed algorithm and the baselines.}
\label{fig:time}
\end{figure}

\begin{table*}[tp]
\centering
\caption{Computational Complexity Comparison}
\label{tab:complex}
\begin{tabular}{|l|c|c|}
\hline
 & \textbf{AO} & \textbf{Proposed PSA-GML} \\ \hline
\textbf{Per-Iteration Complexity} & $\mathcal{O}(N N_t K + K^3)$ & $\mathcal{O}(N N_t K + K^3)$ \\ \hline
\textbf{Refinement Per-Step Overhead} & -- & $\mathcal{O}(D H^2)$ \\ \hline
\textbf{Offline Training} & -- & $N_c\left[P I_{\mathrm{PSO}}\,\mathcal{O}(N N_t K + K^3) + S\left(\mathcal{O}(N N_t K + K^3) + \mathcal{O}(D H^2)\right)\right]$ \\ \hline
\textbf{Online Inference: PSO stage} & -- & $P I_{\mathrm{PSO}}\,\mathcal{O}(N N_t K + K^3)$ \\ \hline
\textbf{Online Inference: refinement / AO iterations} & $N_{\mathrm{AO}}\,\mathcal{O}(N N_t K + K^3)$ & $S\left[\mathcal{O}(N N_t K + K^3) + \mathcal{O}(D H^2)\right]$ \\ \hline
\end{tabular}
\end{table*}

\subsection{Optimized Coefficients}
Fig.~\ref{fig:coeff} visualizes the optimized amplitude-split and phase-shift coefficients obtained by the proposed algorithm for one channel realization. It is observed that the transmission and reflection powers $\beta_{t,n}^2$ and $\beta_{r,n}^2$ are adaptively balanced across elements, and the phase shifts are distributed over $[0, 2\pi)$.

\begin{figure}[t]
\centering
\includegraphics[width=0.5\textwidth]{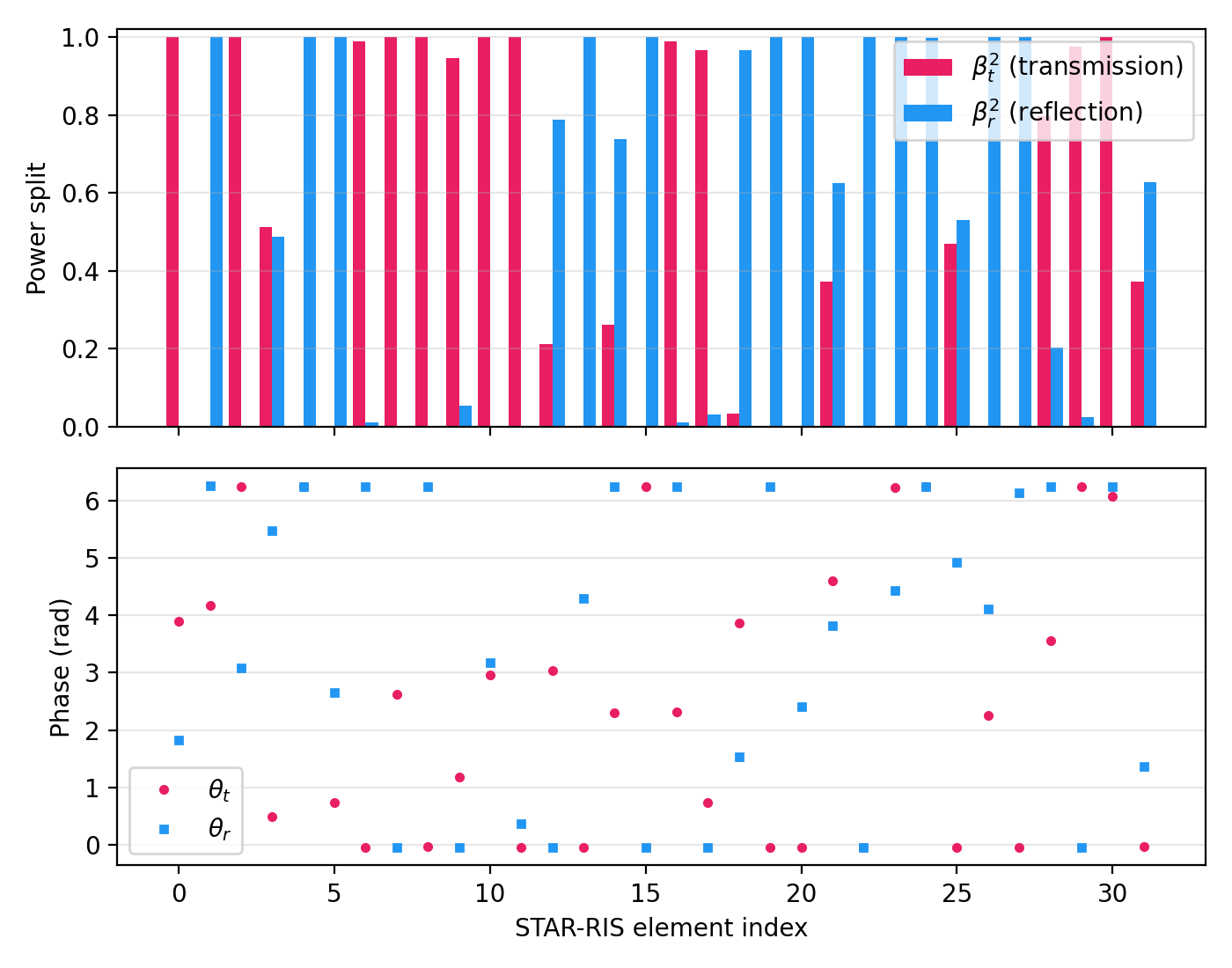}
\caption{Optimized amplitude-split and phase-shift coefficients of the STAR-RIS.}
\label{fig:coeff}
\end{figure}

\section{Conclusion}
\vspace{1.0em}

This paper addressed WSR maximization for a STAR-RIS aided multi-user downlink and introduced the particle-swarm-assisted gradient meta-learning (PSA-GML) algorithm, which combines a PSO global warm start over the STAR-RIS coefficients with a coordinate-wise LSTM meta-optimizer trained by first-order gradient meta-learning to jointly refine the coefficients and the transmit precoder. The experiments show that PSA-GML reached an 11.06 bits/s/Hz WSR, delivering a 13.1\% performance enhancement over the benchmark AO method (6.2\% over a multiple-random-restart AO) and a 35.1\% enhancement over the random-phase scheme, with robustness against initialization and a per-iteration complexity of the same order as the AO baseline. Notably, the benefit of the PSO warm start reached 20\% or more under correlated Saleh-Valenzuela channels, and the learned optimizer reached 83.9\% of the hand-designed Adam refinement in the interference-limited regime.

Future extensions include imperfect channel state information and non-line-of-sight propagation to further stress-test the algorithmic robustness, meta-learning of the PSO hyper-parameters to speed up convergence, and generalization to multi-antenna users and multiple STAR-RISs.

\end{document}